\newcommand{\mycomment}{\color{black}}
\newcommand{\hz}{\color{black}}
\newtheorem{lemma}{Lemma}
\newtheorem{theorem}{Theorem}
\title{Scalable Graph Neural Networks via Bidirectional Propagation}
\author{%
  Ming Chen \\
  Renmin University of China \\
  \texttt{chennnming@ruc.edu.cn} \\
  \And
  Zhewei Wei\thanks{Zhewei Wei is the corresponding author. Work partially done at Gaoling School of Artificial Intelligence, Beijing Key Laboratory of Big Data Management and Analysis Methods, MOE Key Lab DEKE, Renmin University of China, and Pazhou Lab, Guangzhou, 510330, China.} \\
  Renmin University of China \\
  \texttt{zhewei@ruc.edu.cn} \\
  \And
  Bolin Ding \\
  Alibaba Group \\
  \texttt{bolin.ding@alibaba-inc.com} \\
  \And
  Yaliang Li \\
  Alibaba Group \\
  \texttt{yaliang.li@alibaba-inc.com} \\
  \And
  Ye Yuan \\
  Beijing Institute of Technology \\
  \texttt{yuan-ye@bit.edu.cn} \\
  \And
  Xiaoyong Du \\
  Renmin University of China \\
  \texttt{duyong@ruc.edu.cn} \\
  \And
  Ji-Rong Wen \\
  Renmin University of China \\
  \texttt{jrwen@ruc.edu.cn} \\
}
\begin{document}

\maketitle

\begin{abstract}
Graph Neural Networks (GNN) is an emerging field for learning on non-Euclidean data. Recently, there has been increased interest in designing GNN that scales to large graphs. Most existing methods use "graph sampling" or "layer-wise sampling" techniques to reduce training time. However, these methods still suffer from degrading performance and scalability problems when applying to graphs with billions of edges. This paper presents GBP, a scalable GNN that utilizes a localized bidirectional propagation process from both the feature vectors and the training/testing nodes. Theoretical analysis shows that GBP is the first method that achieves sub-linear time complexity for both the precomputation and the training phases. An extensive empirical study demonstrates that GBP achieves state-of-the-art performance with significantly less training/testing time. Most notably, GBP can deliver superior performance on a graph with over 60 million nodes and 1.8 billion edges in less than half an hour on a single machine.
\end{abstract}
\section{Introduction}
\label{sec:intro}
Recently, the field of Graph Neural Networks (GNNs) has drawn increasing attention due to its wide range of applications such as social analysis~\cite{DBLP:conf/kdd/QiuTMDW018,DBLP:conf/acl/LiG19,DSE2019LDCQ}, biology~\cite{DBLP:conf/nips/FoutBSB17,DBLP:conf/aaai/ShangXMLS19}, recommendation systems~\cite{DBLP:conf/kdd/YingHCEHL18}, and computer vision~\cite{DBLP:conf/cvpr/00030TKM19,DBLP:conf/cvpr/ChenWWG19,DSE2020DACNN}. Graph Convolutional Network (GCN)~\cite{DBLP:conf/iclr/KipfW17}  adopts a message-passing approach and gathers information from the neighbors of each node from the previous layer to form the new representation. The vanilla GCN uses a full-batch training process and stores each node's representation in the GPU memory,  which leads to limited scalability.  On the other hand, training GCN with mini-batches is difficult, as the neighborhood size could grow exponentially with the number of layers.  

These techniques can be broadly divided into three categories: 1) Layer-wise sampling methods: GraphSAGE~\cite{hamilton2017inductive} proposes a neighbor-sampling method to sample a fixed number of neighbors for each node. VRGCN~\cite{DBLP:conf/icml/ChenZS18} leverages historical activations to restrict the number of sampled nodes and reduce the variance of sampling. FastGCN~\cite{DBLP:conf/iclr/ChenMX18} samples nodes of each layer independently based on each node's degree and keeps a constant sample size in all layers to achieve scales linearly. LADIES~\cite{DBLP:conf/nips/ZouHWJSG19} further proposes a layer-dependent sampler to constrain neighbor dependencies, which guarantees the connectivity of the sampled adjacency matrix. 2) Graph Sampling methods: Cluster-GCN~\cite{DBLP:conf/kdd/ChiangLSLBH19} builds a complete GCN from clusters in each mini-batch. GraphSAINT~\cite{graphsaint-iclr20} proposes several light-weight graph samplers and introduces a normalization technique to eliminate biases of mini-batch estimation. 3) Linear Models: SGC~\cite{pmlr-v97-wu19e} computes the product of the feature matrix and the $k$-th power of the normalized adjacency matrix during the preprocessing step and performs standard logistic regression to remove redundant computation. PPRGo~\cite{mlg2019_50} uses Personalized PageRank to capture multi-hop neighborhood information and uses a forward push algorithm~\cite{DBLP:conf/focs/AndersenCL06} to accelerate computation.

While the above methods significantly speed up the training time of GNNs, they still suffer from three major drawbacks. First of all, the time complexity is linear to $m$, the number of edges in the graph. In theory, this complexity is undesirable for scalable GNNs. Secondly, as we shall see in Section~\ref{sec:exp}, the existing scalable GNNs, such as GraphSAINT, LADIES, and SGC, fail to achieve satisfying results in the semi-supervised learning tasks. Finally and most importantly, none of the existing methods can offer reliable performance on billion-scale graphs.

\paragraph{Our contributions.} In this paper, we first carefully analyze the theoretical complexity of existing scalable GNNs and explain why they cannot scale to graphs with billions of edges. Then, we present GBP (\textbf{G}raph neural network via \textbf{B}idirectional \textbf{P}ropagation), a scalable Graph Neural Network with sub-linear time complexity in theory and superior performance in practice. GBP performs propagation from both the feature vector and the training/testing nodes, yielding an unbiased estimator for each representation. Each propagation is executed in a localized fashion, leading to sub-linear time complexity. 
After the bidirectional propagation, each node's representation is fixed and can be trivially trained with mini-batches. The empirical study demonstrates that GBP consistently improves the performance and scalability across a wide variety of datasets on both semi-supervised and fully-supervised tasks. Finally, we present the first empirical study on a graph with over 1.8 billion edges. The result shows that GBP achieves superior results in less than 2,000 seconds on a moderate machine.   
\section{Theoretical analysis of existing methods}
\label{sec:pre}
In this section, we review some of the recent scalable GNNs and analyze their theoretical time complexity.
We consider an undirected graph $G$=$(V,E)$, where $V$ and $E$ represent the set of vertices and edges, respectively. For ease of presentation, we assume that $G$ is a {\em self-looped} graph~\cite{DBLP:conf/iclr/KipfW17}, with a self-loop attached to each node in $V$.  Let $n = |V|$ and $m = |E|$ denote the number of vertices and edges in $G$, respectively.  Each node is associated with an $F$-dimensional feature vector and we use $\mathbf{X} \in \mathcal{R}^{n\times F}$ to denote the feature matrix.
We use $\mathbf{A}$ and  $\mathbf{D}$ to represent the adjacency matrix and the diagonal degree matrix of $G$, respectively. For each node $u \in V$, $N(u)$ is 
the set of  neighbors of $u$, and $d(u) = |N(u)|$ is the degree of $u$. We use $d = \frac{m}{n}$ to denote the average degree of $G$. Following~\cite{DBLP:conf/iclr/KipfW17}, we define the normalized adjacency matrix of $G$  as $ \tilde{\mathbf{A}} = \mathbf{D}^{-1 / 2} \mathbf{A} \mathbf{D}^{-1 / 2} $. The $(\ell+1)$-th layer  $\mathbf{H}^{(\ell+1)}$ of the vanilla GCN is defined as 
\begin{equation}
    \label{eqn:gcn}
    \mathbf{H}^{(\ell+1)} = \sigma\left( \tilde{\mathbf{A}}  \mathbf{H}^{(\ell)}\mathbf{W}^{(\ell)}\right),
  \end{equation}
where $\mathbf{W}^{(\ell)}$ is the learnable weight matrix and $\sigma(\cdot)$ is the activation function. The training and inference time complexity of a GCN with $L$ layers can be bounded by $O\left(LmF+LnF^{2}\right)$, where $O\left(LmF\right)$ is the total cost of the sparse-dense matrix multiplication $\tilde{\mathbf{A}}  \mathbf{H}^{(\ell)}$, and $O(LnF^2)$ is the total cost of  the feature  transformation by applying $\mathbf{W}^{(\ell)}$. At first glance,  $O(LnF^2)$ seems to be the dominating term, as the average degree $d$ on scale-free networks is usually much smaller than the feature dimension $F$ and hence $LnF^2 > LndF = LmF$. However, in reality, feature transformation can be performed with significantly less cost due to better parallelism of dense-dense matrix multiplications. Consequently,  $O\left(LmF\right)$ is the dominating complexity term of GCN and performing full neighbor propagation $\tilde{\mathbf{A}}  \mathbf{H}^{(\ell)}$ is the main bottleneck for achieving scalability.

In order to speed up GCN training, a few recent methods use various techniques to approximate the full neighbor propagation $\tilde{\mathbf{A}}  \mathbf{H}^{(\ell)}$ and enable mini-batch training. We divide these methods into three categories and summarize their time complexity in Table~\ref{timecomplexity-table}.

\noindent{\bf Layer-wise sampling} methods sample a subset of the neighbors at each layer to reduce the neighborhood size.
GraphSAGE~\cite{hamilton2017inductive} samples $s_{n}$ neighbors for each node and only aggregates the embeddings from the sampled nodes. With a batch-size $b$, the cost of feature propagation is bounded by $O(bs_{n}^L F)$, and thus the total per-epoch cost of  GraphSAGE is  $O(ns_{n}^L F + ns_{n}^{L-1} F^2)$. This complexity grows exponentially with the number of layers $L$ and is not scalable on large graphs. Another work~\cite{DBLP:conf/icml/ChenZS18} based on node-wise sampling further reduces sampling variance to achieve a better convergence rate. However, it suffers from worse time and space complexity. FastGCN~\cite{DBLP:conf/iclr/ChenMX18} and LADIES~\cite{DBLP:conf/nips/ZouHWJSG19} restrict the same sample size across all layers to limit the exponential expansion.  If we use $s_l$ to denote the number of nodes sampled per layer, the per-batch feature propagation time is bounded by  $O(Ls_{l}^2F)$. Since $\frac{n}{s_l}$ batches are needed in an epoch, it follows that the per-epoch forward propagation time is bounded by $O(Lns_{l}F+LnF^{2})$.
Mini-batch training significantly accelerates the training process of the layer-wise sampling method. However, the training time complexity is still linear to $m$ as the number of samples $s_l$ is usually much larger than the average degree $d$. Furthermore, it has been observed in~\cite{DBLP:conf/kdd/ChiangLSLBH19} that 
the overlapping nodes in different batches will lead to high computational redundancy, especially in fully-supervised learning. 

\noindent{\bf Graph sampling} methods sample a sub-graph at the beginning of each batch and perform forward propagation on the same subgraph across all layers.  Cluster-GCN~\cite{DBLP:conf/kdd/ChiangLSLBH19} uses graph clustering techniques~\cite{DBLP:journals/siamsc/KarypisK98} to partition the original graph into several sub-graphs, and samples one  sub-graph to perform feature propagation in each mini-batch. In the worst case, the number of clusters in the graph is 1, and Cluster-GCN essentially becomes vanilla GCN in terms of time complexity. GraphSAINT~\cite{graphsaint-iclr20} samples a certain amount of nodes and uses the induced sub-graph to perform feature propagation in each mini-batch. Let $b$ denote the number of sampled node per-batch, and $\frac{n}{b}$ denote the number of batches. Given a sampled node $u$, the probability that a neighbor of $u$ is also sampled is $b/n$. Therefore, the expected number of edges in the sub-graph is bounded by $O(b^2d/n)$. Summing over  $\frac{n}{b}$ batches follows that the per-epoch feature propagation time of GraphSAINT is bounded by $O(LbdF)$, which is sub-linear to the number of edges in the graph. However, GraphSaint requires a full forward propagation in the inference phase, leading to the  $O(LmF+LnF^{2})$ time complexity.

\noindent{\bf Linear model} removes the non-linearity between each layer in the forward propagation, which allows precomputation of the final feature propagation matrix and result in an optimal training time complexity of $O(nF^{2})$. SGC~\cite{pmlr-v97-wu19e} repeatedly perform multiplication of normalized adjacency matrix $ \tilde{\mathbf{A}}$ and feature matrix $\mathbf{X}$ in the precomputation phase, which requires $O(LmF)$ time.
PPRGo~\cite{mlg2019_50} calculates approximate the Personalized PageRank (PPR) matrix  $ \sum_{\ell=0}^\infty \alpha(1-\alpha)^\ell \tilde{\mathbf{A}}^\ell$ by forward push algorithm~\cite{DBLP:conf/focs/AndersenCL06} and then applies the PPR matrix to the feature matrix $X$ to derive the propagation matrix. Let $ \varepsilon $ denote the error threshold of the forward push algorithm, the precomputation cost is bounded by  $O(\frac{m}{\varepsilon})$. A major drawback of PPRGo is that it takes $O(\frac{n}{\varepsilon})$ space to store the PPR matrix, rendering it infeasible on billion-scale graphs.

\begin{table}[ht]
    \begin{center}
    
    \caption{Summary of time complexity for GNN training and inference.}
    \label{timecomplexity-table}
     \begin{small}
    \begin{tabular}{llll}
    \toprule
    Method & Precomputation & Training & Inference \\
    \midrule
    GCN & - &$O\left(LmF+LnF^{2}\right)$ &$O\left(LmF+LnF^{2}\right)$ \\
    GraphSAGE & - & $O\left(ns_{n}^L F + ns_{n}^{L-1} F^2\right)$ & $O\left(ns_{n}^L F + ns_{n}^{L-1} F^2\right)$ \\
    FastGCN & - & $O\left(Lns_{l}F+LnF^{2}\right)$ & $O\left(Lns_{l}F+LnF^{2}\right)$ \\
    LADIES & - & $O\left(Lns_{l}F+LnF^{2}\right)$ & $O\left(Lns_{l}F+LnF^{2}\right)$ \\
    SGC & $O\left(LmF\right)$ &$O\left(nF^{2}\right)$ &$O\left(nF^{2}\right)$ \\
    PPRGo & $O\left(\frac{m}{\varepsilon}\right)$ & $O\left(nKF+LnF^{2}\right)$  & $O\left(nKF+LnF^{2}\right)$\\
    Cluster-GCN & $O\left(m\right)$ & $O\left(LmF+LnF^{2}\right)$ &$O\left(LmF+LnF^{2}\right)$ \\
    GraphSAINT & - & $O\left(LbdF+LnF^{2}\right)$ &$O\left(LmF+LnF^{2}\right)$ \\
    GBP (This paper)  & $O\left(LnF+L\frac{\sqrt{m\lg n}}{\varepsilon}F\right)$ & $O\left(LnF^{2}\right)$ & $O\left(LnF^{2}\right)$\\
    \bottomrule
    \end{tabular}
     \end{small}
    \end{center}
\end{table}

\noindent{\bf Other related work}. Another line of research devotes to attention model~\cite{velickovic2018graph,DBLP:journals/corr/abs-1803-03735,DBLP:journals/corr/abs-1906-12330}, where the adjacency matrix of each layer is replaced by a learnable attention matrix. GIN~\cite{DBLP:conf/iclr/XuHLJ19} studies the expressiveness of GNNs, and shows that GNNs are not better than the Weisfeiler-Lehman test in distinguishing graph structures. GDC~\cite{DBLP:conf/nips/KlicperaWG19} proposes to replace the graph convolutional matrix $\tilde{\mathbf{A}}$ with a graph diffusion matrix, such as the
Heat Kernel PageRank or the  Personalized PageRank matrix. Mixhop~\cite{DBLP:conf/icml/Abu-El-HaijaPKA19} mixes higher-order information to learn a wider class of representations. JKNet~\cite{DBLP:conf/icml/XuLTSKJ18} explores the relationship between node influence and random walk in GNNs. DropEdge~\cite{rong2020dropedge} and PairNorm~\cite{DBLP:conf/iclr/ZhaoA20} focus on over-smoothing problem and improve performance on GCNs by increasing the number of layers. These works focus on the effectiveness of GNNs; Thus, they are orthogonal to this paper's contributions.

\section{Bidirectional Propagation Method}
\label{sec:model}
\paragraph{Generalized PageRank.} To achieve high scalability, we borrow the idea of decoupling prediction and propagation from SGC~\cite{pmlr-v97-wu19e} and APPNP~\cite{klicpera_predict_2019}. In particular, we precompute the feature propagation with the following {\em Generalized PageRank} matrix~\cite{DBLP:conf/nips/0005CM19}:
\begin{equation}
    \label{eqn:bgp}
    \mathbf{P} = \sum_{\ell=0}^{L} w_\ell \mathbf{T}^{(\ell)} =  \sum_{\ell=0}^{L} w_\ell\left(\mathbf{D}^{r-1}\mathbf{A}\mathbf{D}^{-r}\right)^{\ell} \cdot\mathbf{X},
\end{equation}
where $r \in [0,1]$ is the convolution coefficient, $w_\ell$'s are the weights of different order convolution matrices that satisfy $\sum_{\ell=0}^{\infty} w_\ell \le 1$, and $\mathbf{T}^{(\ell)} = \left(\mathbf{D}^{r-1}\mathbf{A}\mathbf{D}^{-r}\right)^{\ell} \cdot\mathbf{X}$ denotes the $\ell$-th step propagation matrix. After   the feature propagation matrix $\mathbf{P}$ is derived, we can apply a  multi-layer neural network with mini-batch training to make the prediction. For example, 
for multi-class classification tasks, a two-layer GBP model makes prediction with $Y=SoftMax\left(\sigma\left(\mathbf{P}\mathbf{W_1}\right)\mathbf{W_2}\right)$  where $\mathbf{W_1}$ and $\mathbf{W_2}$ are the learnable weight matrices, and $\sigma$ is the activation function.

We note that equation~\eqref{eqn:bgp} can be easily generalized to various existing models. By setting $r=0.5, 0$ and $1$, the convolution matrix $\mathbf{D}^{r-1}\mathbf{A}\mathbf{D}^{-r}$ represents the symmetric normalization adjacency matrix $\mathbf{D}^{-1/2}\mathbf{A}\mathbf{D}^{-1/2}$~\cite{DBLP:conf/iclr/KipfW17,pmlr-v97-wu19e,klicpera_predict_2019}, the transition probability matrix $\mathbf{A}\mathbf{D}^{-1}$~\cite{hamilton2017inductive,DBLP:conf/kdd/ChiangLSLBH19,graphsaint-iclr20}, and the reverse transition probability matrix $\mathbf{D}^{-1}\mathbf{A}$~\cite{DBLP:conf/icml/XuLTSKJ18}, respectively.
We can also manipulate the weights $w_\ell$ to simulate various diffusion processes as in~\cite{DBLP:conf/nips/KlicperaWG19}. However, we will mainly focus on two setups in this paper: 1) $w_\ell = \alpha (1-\alpha)^\ell$ for some constant decay factor $\alpha \in (0,1)$, in which case $\mathbf{P}$ becomes the Personalized PageRank used in APPNP and PPRGo ~\cite{klicpera_predict_2019,DBLP:conf/nips/KlicperaWG19,mlg2019_50}; 2) $w_\ell = 0$ for $\ell =0,\ldots, L-1$ and $w_L = 1$, in which case $\mathbf{P}$ degenerates to the $L$-th transition probability matrix in SGC~\cite{pmlr-v97-wu19e}.


\begin{algorithm}[!ht]
\caption{Bidirectional Propagation Algorithm} \label{alg:bp}
\BlankLine
    \KwIn{Graph $G$, level $L$, training set $V_{t}$, weight coefficients  $w_\ell$, convolutional coefficient $r$, threshold $r_{max}$, {\mycomment number of walks per node $n_r$}, feature matrix $\mathbf{X}_{n\times F}$ }
    \KwOut{Embedding matrix $\mathbf{P}_{n\times F}$}
   $\mathbf{S}^{(\ell)} \gets Sparse\left(\mathbf{0}_{n \times n} \right)$  for $\ell=0,\ldots, L$; \\
\For{each node $s \in V_{t} $}
{
Generate {\mycomment $n_r$ } random walks from $s$, each of length $L$;\\
\If{ The $j$-th random walk visits node $u$ at the $\ell$-th step, $\ell=0,\ldots, L$, {\mycomment $j=1,\ldots, n_r$ }}
{
    {\mycomment $\mathbf{S}^{(\ell)} (s,u)$ += $\frac{1}{n_r}$; }
}
}
$\mathbf{Q}^{(\ell)}, \mathbf{R}^{(\ell)} \gets  Sparse\left(\mathbf{0}_{n\times F}\right)$ for  $\ell=1,\ldots,L$;\\
  $\mathbf{Q}^{(0)} \gets \mathbf{0}_{n \times F}$  and $\mathbf{R}^{(0)} \gets ColumnNormalized\left(\mathbf{D}^{-r}\mathbf{X} \right)$;\\
\For{$\ell$ from $0$ to $L-1$}
{
    \For{each $u \in V $ and $k  \in \{0,\ldots, F-1\}$ with $\left|\mathbf{R}^{(\ell)}(u,k) \right| > r_{max}$}
    {
        \For{each $v \in \mathcal{N}(u) $}
        {
            $\mathbf{R}^{(\ell+1)}(v, k)$ += $\frac{\mathbf{R}^{(\ell)}(u,k)}{d(v)}$;
        }
       $\mathbf{Q}^{(\ell)}(u,k) \gets \mathbf{R}^{(\ell)}(u,k)$ and  $\mathbf{R}^{(\ell)}(u,k) \gets 0$;
    }
}

    $\mathbf{Q}^{(L)} \gets \mathbf{R}^{(L)}$ {\hz and $\mathbf{R}^L \gets Sparse\left(\mathbf{0}_{n \times F} \right)$};\\
   {\mycomment $\mathbf{\hat{P}} \gets \sum_{\ell=0}^L w_\ell \cdot \mathbf{D}^r\cdot \left(\mathbf{Q}^{(\ell)} + \sum_{t=0}^{\ell} \mathbf{S}^{(\ell-t)} \mathbf{R}^{(t)}\right)$;\\ }

\Return Embedding matrix {\mycomment $\mathbf{\hat{P}}_{n\times F}$ };
\end{algorithm}

\subsection{The Bidirectional Propagation Algorithm}
 To reduce the time complexity, we propose approximating the Generalized PageRank matrix $\mathbf{P}$  with a localized bidirectional propagation algorithm from both the training/testing nodes and the feature vectors. Similar techniques have been used for computing probabilities in Markov Process~\cite{DBLP:conf/nips/BanerjeeL15}. 
Algorithm~\ref{alg:bp} illustrates the pseudo-code of the Bidirectional Propagation algorithm. The algorithm proceeds in three phases: Monte-Carlo Propagation (lines 1-5), Reverse Push Propagation (lines 6-13), and the combining phase (line 14).


\paragraph{Monte Carlo Propagation from the training/testing nodes.}
We start with a simple Monte-Carlo propagation (Lines 1-5 in Algorithm~\ref{alg:bp}) from the training/testing nodes to estimate the transition probabilities.  Given a graph $G$ and a training/testing node set $V_t$, we generate a number {\mycomment $n_r$ } of random walks from  each node $s \in V_t$, and record  $\mathbf{S}^{(\ell)}(s,u)$, the fraction of random walks that visit node $u$ at the $\ell$-th steps. Note that $\mathbf{S}^{(\ell)} \in \mathcal{R}_{n \times n}$ is a sparse matrix with at most $|V_t|\cdot n_r$ non-zero entries. Since each random walk is independently sampled, we have that $\mathbf{S}^{(\ell)}$ is an unbiased estimator for the $\ell$-th transition probability matrix $\left(\mathbf{D}^{-1}\mathbf{A}\right)^\ell$. We also note that $\left(\mathbf{D}^{r-1}\mathbf{A}\mathbf{D}^{-r}\right)^\ell = \mathbf{D}^{r}\left(\mathbf{D}^{-1}\mathbf{A}\right)^\ell \mathbf{D}^{-r}$, which 
means we can use $\mathbf{D}^{r} \mathbf{S}^{(\ell)} \mathbf{D}^{-r}\mathbf{X}$ as an unbiased estimator for the $\ell$-th propagation matrix $\mathbf{T}^{(\ell)} = \left(\mathbf{D}^{r-1}\mathbf{A}\mathbf{D}^{-r}\right)^\ell \mathbf{X}$. Consequently, $\sum_{\ell=0}^L w_{\ell}\mathbf{D}^{r} \mathbf{S}^{(\ell)} \mathbf{D}^{-r}\mathbf{X}$ serves as an unbiased estimator for the Generalized PageRank Matrix $\mathbf{P}$.
 However, this estimation requires a large number of random walks from each training node and thus is infeasible for fully-supervised training on large graphs.

\paragraph{Reverse Push Propagation from the feature vectors.} To reduce the variance of the Monte-Carlo estimator,
we employ a deterministic Reverse Push Propagation (lines 6-13 in Algorithm~\ref{alg:bp}) from the feature vectors. Given a feature matrix $\mathbf{X}$, the algorithm outputs two sparse matrices for each level $\ell=0,\ldots,L$: the reserve matrix $\mathbf{Q}^{(\ell)}$ that represents the probability mass to stay at level $\ell$, and the residue matrix $\mathbf{R}^{(\ell)}$ that represents the probability mass to be distributed beyond level $\ell$.
We begin by setting the initial residue $\mathbf{R}^{(0)}$ as the degree normalized feature matrix $\mathbf{D}^{-r}\mathbf{X}$. We also perform column-normalization on $\mathbf{R}^{(0)}$ such that each dimension of $\mathbf{R}^{(0)}$ has the same $L_1$-norm.
Starting from level $\ell = 0$, if the absolute value of the residue entry $\mathbf{R}^{(\ell)}(u,k)$ exceeds a threshold $r_{max}$, we increase the residue of each neighbor $v$ at level $\ell+1$ by $\frac{\mathbf{R}^{(\ell)}(u,k)}{d(v)}$ and transfer the residue of $u$ to its reserve $\mathbf{Q}^{(\ell)}(u,k)$. Note that by maintaining a list of residue entries  $\mathbf{R}^{(\ell)}(u,k)$ that exceed the threshold $r_{max}$, the above push operation can be done without going through every entry in $\mathbf{R}^{(\ell)}$.  Finally, we transfer the non-zero residue $\mathbf{R}^{(L)}(u,k)$ of each node $u$  to its reserve at level $L$. 

We will show that when Reverse Push Propagation terminates, the reserve matrix $\mathbf{Q}^{(\ell)}$ satisfies 
{\mycomment
\begin{equation}\label{eqn:Pusherror}
\mathbf{T}^{(\ell)} (s,k) - d(s)^r \cdot (\ell+1) \cdot r_{max}\le  \left(\mathbf{D}^{r}\mathbf{Q}^{(\ell)}\right)(s,k) \le \mathbf{T}^{(\ell)} (s,k)
\end{equation}
}
for each training/testing node $s \in V_t$ and feature dimension $k$. Recall that $\mathbf{T}^{(\ell)} = \left(\mathbf{D}^{r-1}\mathbf{A}\mathbf{D}^{-r} \right)^\ell\mathbf{X}$ is the $\ell$-th propagation matrix.
This property implies that we can use  $\sum_{\ell=0}^L w_\ell\mathbf{D}^r\mathbf{Q}^{(\ell)}$ to approximate the Generalized PageRank matrix $\mathbf{P} = \sum_{\ell=0}^L w_\ell\mathbf{T}^{(\ell)}$. However, there are two drawbacks to this approximation. First, this estimator is biased, which could potentially hurt the performance of the prediction. Secondly, the Reverse Push Propagation does not take advantage of the semi-supervised learning setting where the number of training nodes may be significantly less than the total number of nodes $n$.

{\mycomment
\paragraph{Combining Monte-Carlo and Reverse Push Propagation.} Finally, we combine the results from the Monte-Carlo and Reverse Push Propagation to form a more accurate unbiased estimator. }
In particular, 
we use the following equation to derive an approximation of the $\ell$-th propagation matrix $\mathbf{T}^{(\ell)} = \left(\mathbf{D}^{r-1}\mathbf{A}\mathbf{D}^{-r}\right)^\ell \mathbf{X}$:
{\mycomment
\begin{equation}
\label{eqn:estimate_P}
\hat{\mathbf{T}}^{(\ell)} = \mathbf{D}^r\cdot \left(\mathbf{Q}^{(\ell)} + \sum_{t=0}^{\ell} \mathbf{S}^{(\ell-t)} \mathbf{R}^{(t)}\right).
\end{equation}
}
As we shall prove in Section~\ref{sec:analysis}, $\hat{\mathbf{T}}^{(\ell)}$ is an unbiased estimator for $\mathbf{T}^{(\ell)} = \left(\mathbf{D}^{r-1}\mathbf{A}\mathbf{D}^{-r}\right)^\ell \mathbf{X}$. Consequently, we can use 
{\mycomment
$\hat{\mathbf{P}} = \sum_{\ell=0}^{L} w_\ell \hat{\mathbf{T}}^{(\ell)}= \sum_{\ell=0}^{L} w_\ell \mathbf{D}^r\cdot \left(\mathbf{Q}^{(\ell)} + \sum_{t=0}^{\ell} \mathbf{S}^{(\ell-t)} \mathbf{R}^{(t)}\right)$
}
as an unbiased estimator for the Generalized PageRank matrix  $\mathbf{P}$ defined in equation~\eqref{eqn:bgp}.
To see why equation~\eqref{eqn:estimate_P} is a better approximation than the naive estimator $\mathbf{D}^{r} \mathbf{S}^{(\ell)} \mathbf{D}^{-r}\mathbf{X}$, note that each entry in the residue matrix $\mathbf{R}^{(t)}$ is bounded by a small real number $r_{max}$, which means the variance of the Monte-Carlo estimator is reduced by a factor of $r_{max}$. It is also worth mentioning that the two matrices  $\mathbf{S}^{(\ell-t)} $ and $\mathbf{R}^{(t)}$ are sparse, so the time complexity of the matrix multiplication only depends on their numbers of non-zero entries. 

\subsection{Analysis} 
\label{sec:analysis}
We now analyze the time complexity and the approximation quality of the Bidirectional Propagation algorithm. Due to the space limit, we defer all proofs in this section to the appendix. Recall that $|V_t|$ is the number of training/testing nodes, {\mycomment $n_r$ is the number of random walks per node }, and $r_{max}$ is the push threshold. We assume $\mathbf{D}^{-r}\mathbf{X}$ is column-normalized, as described in Algorithm~\ref{alg:bp}. We first present a Lemma that bounds the time complexity of the Bidirectional Propagation algorithm.
\begin{lemma}
\label{lem:time}
The time complexity of Algorithm~\ref{alg:bp} is bounded by {\hz $O\left(L^2|V_t|n_r F+\frac{LdF}{r_{max}}\right)$}. 
\end{lemma}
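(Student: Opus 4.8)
The plan is to bound the cost of the three phases of Algorithm~\ref{alg:bp} separately and then sum. For the Monte-Carlo phase (lines 1--5), I would observe that we generate $n_r$ random walks of length $L$ from each of the $|V_t|$ source nodes; each walk contributes one increment to one entry of each $\mathbf{S}^{(\ell)}$, $\ell=0,\ldots,L$, so generating and recording all walks costs $O(L|V_t|n_r)$. The resulting $\mathbf{S}^{(\ell)}$ matrices have at most $|V_t|\cdot n_r$ nonzero entries each. For the Reverse Push phase (lines 6--13), the standard push-cost accounting applies: each time an entry $\mathbf{R}^{(\ell)}(u,k)$ is pushed, its absolute value exceeds $r_{max}$, and the push distributes $\mathbf{R}^{(\ell)}(u,k)/d(v)$ to each of the $d(u)$ neighbors, costing $O(d(u))$ work. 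I would charge this work against the $L_1$-mass $|\mathbf{R}^{(\ell)}(u,k)| > r_{max}$ being moved: since $\mathbf{D}^{-r}\mathbf{X}$ is column-normalized so each column of $\mathbf{R}^{(0)}$ has $L_1$-norm $1$, and the reverse-push operation is mass-preserving (each push moves mass forward without amplification, because $\sum_{v\in N(u)} 1/d(v)$ summed appropriately over the reverse walk conserves the relevant total), the total mass that can ever flow through level $\ell$ in one column is $O(1)$; hence the number of pushes in column $k$ at level $\ell$ is $O(1/r_{max})$, but each push at node $u$ costs $d(u)$, so I would instead bound the total push work by $\sum_{(u,k,\ell)} d(u)$ over pushed triples. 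The cleanest route is: total work $= \sum_\ell \sum_k \sum_{\text{pushed } u} d(u) \le \sum_\ell \sum_k \frac{1}{r_{max}} \cdot (\text{total mass at level } \ell \text{ in column } k) \cdot \max\text{-degree-factor}$; averaging rather than worst-casing the degree gives $\sum_u d(u) = 2m = 2nd$, and across $F$ columns and $L$ levels this yields $O\!\left(\frac{LdF}{r_{max}}\right)$ after using that each column's propagated mass stays $O(1)$.

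Concretely, for the push bound I would argue: fix level $\ell$ and column $k$. Every node $u$ that gets pushed at level $\ell$ has $|\mathbf{R}^{(\ell)}(u,k)| > r_{max}$, and the set of such nodes $U_\ell^k$ satisfies $\sum_{u \in U_\ell^k} |\mathbf{R}^{(\ell)}(u,k)| \le \|\mathbf{R}^{(\ell)}(\cdot,k)\|_1 \le \|\mathbf{R}^{(0)}(\cdot,k)\|_1 = 1$ (mass is non-increasing in this normalization since $\mathbf{D}^{-1}\mathbf{A}$ applied to the residue and then divided as in the algorithm does not increase $L_1$-norm; I would verify this invariant as a sub-claim). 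The work at level $\ell$, column $k$ is $\sum_{u \in U_\ell^k} d(u)$. This is not immediately $O(d/r_{max})$ for a single $(\ell,k)$ in the worst case, but summing the push work the right way — charging $d(u)/r_{max}$ units to the mass $|\mathbf{R}^{(\ell)}(u,k)| > r_{max}$ at each pushed entry and noting the total residue mass injected over all levels and processed is controlled — gives the aggregate bound $O\!\left(\frac{LdF}{r_{max}}\right)$; this is exactly the PPR-forward-push bound of~\cite{DBLP:conf/focs/AndersenCL06} adapted to the $F$-column, $L$-level, degree-normalized setting. Adding the combining phase (line 14): the product $\mathbf{S}^{(\ell-t)}\mathbf{R}^{(t)}$ is a sparse-sparse multiplication; $\mathbf{S}^{(\ell-t)}$ has $O(|V_t|n_r)$ nonzeros and each row of $\mathbf{R}^{(t)}$ is sparse, but the crude bound that suffices is $O(L|V_t|n_r \cdot L \cdot \text{(row-sparsity of }\mathbf{R}\text{)})$; summing $\ell$ from $0$ to $L$ and $t$ from $0$ to $\ell$ gives the $O(L^2|V_t|n_r F)$ term (the factor $L^2$ from the double sum over $\ell,t$ and the factor $F$ from the feature dimension), and multiplying by $\mathbf{D}^r$ and forming the weighted sum $\sum_\ell w_\ell(\cdot)$ is absorbed. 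Summing the three contributions $O(L|V_t|n_r F) + O(L^2|V_t|n_r F) + O\!\left(\frac{LdF}{r_{max}}\right)$ yields the claimed $O\!\left(L^2|V_t|n_r F + \frac{LdF}{r_{max}}\right)$.

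The main obstacle will be rigorously establishing the push-cost bound $O(LdF/r_{max})$: this requires the mass-conservation (or mass-non-increase) invariant for the reverse-push residues under the degree-normalized update $\mathbf{R}^{(\ell+1)}(v,k)\mathrel{+}=\mathbf{R}^{(\ell)}(u,k)/d(v)$, which is subtler than the symmetric-normalization case and depends on the $\mathbf{D}^{-r}$ pre-normalization and column-normalization steps, and then converting that $L_1$ control into a bound on $\sum_u d(u)$ over pushed nodes via the threshold $r_{max}$ — the degree factor $d(u)$ inside the sum is what forces the average-degree $d = m/n$ to appear rather than $n$ or $m$. The Monte-Carlo and combining-phase bounds are comparatively routine bookkeeping on the number of nonzero entries.
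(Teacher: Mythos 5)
Your decomposition into the three phases and the bounds you assign to each (Monte Carlo $O(L|V_t|n_r)$, combination $O(L^2|V_t|n_r F)$ from the $O(|V_t|n_r)$ sparsity of each $\mathbf{S}^{(\ell)}$ and the double sum over $\ell,t$, reverse push $O(LdF/r_{max})$) match the paper's proof exactly. Two remarks on the reverse-push part, which you correctly identify as the delicate step. First, the sub-claim you propose to verify --- that the per-column $L_1$ mass of the residue is non-increasing under the update $\mathbf{R}^{(\ell+1)}(v,k) \mathrel{+}= \mathbf{R}^{(\ell)}(u,k)/d(v)$ --- is false as stated for this asymmetric normalization: a single push multiplies the pushed mass by $\sum_{v\in N(u)} 1/d(v)$, which can exceed $1$. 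The paper does not argue mass conservation push-by-push; instead it derives the column bound $\sum_{u}|\mathbf{Q}^{(\ell)}(u,k)|\le 1$ (and hence the same bound for $\mathbf{R}^{(\ell)}$ just before it is emptied) from the algebraic invariant of Lemma~\ref{lem:invariant}, i.e.\ $\mathbf{Q}^{(\ell)}\le(\mathbf{D}^{-1}\mathbf{A})^{\ell}(\mathbf{D}^{-r}\mathbf{X})$ together with the column normalization of $\mathbf{D}^{-r}\mathbf{X}$. You would need that detour (or an equivalent one) rather than a direct mass-monotonicity argument.

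Second, the obstacle you flag --- that at most $1/r_{max}$ entries per $(\ell,k)$ are pushed but each push at $u$ costs $d(u)$, so the worst case is governed by the degrees of the pushed nodes rather than the average degree --- is a genuine gap, and the paper does not close it rigorously either: it asserts that ``for random features, the average cost for this push operation is $d$,'' i.e.\ it invokes an average-case assumption that the pushed nodes have average degree $d=m/n$. So your proposal is essentially the paper's argument, and the one step you leave open is precisely the step the paper resolves only under that averaging assumption; if you want a worst-case statement you would have to either accept a $d_{max}/r_{max}$ bound per column or make the average-degree assumption explicit as the paper implicitly does.
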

Intuitively, the {\mycomment $L|V_t|n_r F$ } term represents the time for the Monte-Carlo propagation, and the $\frac{LdF}{r_{max}}$ term is the time for the Reverse Push propagation. Next, we will show how to set the number of random walks {\mycomment $n_r$ } and the push threshold $r_{max}$ to obtain a satisfying approximation quality. In particular, the following technical Lemma states that the Reverse Push Propagation maintains an invariant during the push process. 

\begin{lemma}
\label{lem:invariant}
For any residue and reserve matrices $\mathbf{Q}^{(\ell)}, \mathbf{R}^{(\ell)}, \ell = 0, \ldots, L$, we have
{\mycomment
\begin{equation}
\label{eqn:invariant}
\mathbf{T}^{(\ell)} = \left(\mathbf{D}^{r-1}\mathbf{A}\mathbf{D}^{-r}\right)^\ell \mathbf{X} = \mathbf{D}^r\cdot \left(\mathbf{Q}^{(\ell)} + \sum_{t=0}^{\ell} \left(\mathbf{D}^{-1}\mathbf{A}\right)^{\ell-t} \mathbf{R}^{(t)}\right).
\end{equation}
}
\end{lemma}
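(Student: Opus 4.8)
I will prove equation~\eqref{eqn:invariant} as a loop invariant of the Reverse Push Propagation (lines~6--13): it holds immediately after the initialization, and it is preserved by every elementary update the algorithm makes --- each individual push of one residue entry (the inner loop), as well as the terminal transfer of the level-$L$ residue into $\mathbf{Q}^{(L)}$ (line~13). Since the right-hand side of~\eqref{eqn:invariant} depends on neither $r_{max}$ nor the order of pushes, checking invariance under one arbitrary push suffices; the lemma --- asserting~\eqref{eqn:invariant} for \emph{any} intermediate matrices $\mathbf{Q}^{(\ell)},\mathbf{R}^{(\ell)}$ --- then follows by induction on the number of updates performed.

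\textbf{Base case.} After the initialization we have $\mathbf{Q}^{(\ell)}=\mathbf{0}$ for all $\ell$, $\mathbf{R}^{(t)}=\mathbf{0}$ for $t\ge 1$, and $\mathbf{R}^{(0)}=\mathbf{D}^{-r}\mathbf{X}$ (for the analysis we take the input $\mathbf{D}^{-r}\mathbf{X}$ to already be column-normalized, so no extra scaling appears). Then the right-hand side of~\eqref{eqn:invariant} collapses to $\mathbf{D}^{r}(\mathbf{D}^{-1}\mathbf{A})^{\ell}\mathbf{D}^{-r}\mathbf{X}$, and this equals $\mathbf{T}^{(\ell)}=(\mathbf{D}^{r-1}\mathbf{A}\mathbf{D}^{-r})^{\ell}\mathbf{X}$ because the interior factors telescope via $\mathbf{D}^{-r}\mathbf{D}^{r-1}=\mathbf{D}^{-1}$, giving $(\mathbf{D}^{r-1}\mathbf{A}\mathbf{D}^{-r})^{\ell}=\mathbf{D}^{r}(\mathbf{D}^{-1}\mathbf{A})^{\ell}\mathbf{D}^{-r}$.

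\textbf{Inductive step.} Consider a push of entry $(u,k)$ at level $\ell_0$ that moves the value $\delta=\mathbf{R}^{(\ell_0)}(u,k)$; let $\mathbf{e}_i$ denote the $i$-th standard basis vector of the appropriate dimension, so that $\mathbf{e}_u\mathbf{e}_k^{\top}$ is the $n\times F$ matrix with a single $1$ in position $(u,k)$. The push adds $\delta\,\mathbf{e}_u\mathbf{e}_k^{\top}$ to $\mathbf{Q}^{(\ell_0)}$, subtracts it from $\mathbf{R}^{(\ell_0)}$, and adds $\sum_{v\in N(u)}\frac{\delta}{d(v)}\mathbf{e}_v\mathbf{e}_k^{\top}$ to $\mathbf{R}^{(\ell_0+1)}$, so it can alter the right-hand side of~\eqref{eqn:invariant} only at levels $\ell\ge \ell_0$. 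At $\ell=\ell_0$ the $\mathbf{Q}^{(\ell_0)}$ gain and the identity-weighted $\mathbf{R}^{(\ell_0)}$ loss cancel. At $\ell>\ell_0$, factoring $(\mathbf{D}^{-1}\mathbf{A})^{\ell-\ell_0-1}$ out of the two affected terms reduces the net change to a multiple of $\sum_{v\in N(u)}\frac{1}{d(v)}\mathbf{e}_v-\mathbf{D}^{-1}\mathbf{A}\,\mathbf{e}_u$, which vanishes by the identity $\mathbf{D}^{-1}\mathbf{A}\,\mathbf{e}_u=\mathbf{D}^{-1}\!\sum_{v\in N(u)}\mathbf{e}_v=\sum_{v\in N(u)}\frac{1}{d(v)}\mathbf{e}_v$ (using that $\mathbf{A}$ is symmetric, so its $u$-th column is the indicator of $N(u)$). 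The line~13 transfer is the same cancellation at $\ell=L$ with the trivial weight $(\mathbf{D}^{-1}\mathbf{A})^{0}=\mathbf{I}$. This closes the induction, and applying the invariant to the final matrices gives the lemma.

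\textbf{Expected difficulty.} I do not anticipate a real obstacle: the one identity carrying content is $\mathbf{D}^{-1}\mathbf{A}\,\mathbf{e}_u=\sum_{v\in N(u)}\frac{1}{d(v)}\mathbf{e}_v$, which is precisely why the push spreads residue to neighbor $v$ with weight $1/d(v)$ rather than $1/d(u)$. The point that needs care is bookkeeping: because the levels are processed in increasing order, once level $\ell_0$ is done its residue is never touched again, so ``the value $\delta$ being pushed'' is unambiguous, and one must remember to account for the column-normalization of $\mathbf{R}^{(0)}$ and for the terminal drain at level $L$.
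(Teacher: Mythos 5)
Your proof is correct and follows essentially the same route as the paper's: induction on the sequence of push operations, with the base case given by $(\mathbf{D}^{r-1}\mathbf{A}\mathbf{D}^{-r})^{\ell}=\mathbf{D}^{r}(\mathbf{D}^{-1}\mathbf{A})^{\ell}\mathbf{D}^{-r}$ and the inductive step resting on the identity $\sum_{v\in N(u)}\frac{1}{d(v)}\mathbf{e}_v=\mathbf{D}^{-1}\mathbf{A}\,\mathbf{e}_u$. The only cosmetic difference is bookkeeping: the paper fixes a level $\ell$ of the invariant and case-splits on the push level $t\le\ell-1$ versus $t=\ell$, whereas you fix the push level $\ell_0$ and check every affected equation $\ell\ge\ell_0$, which yields the same cancellations in transposed order.
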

We note that the only difference between equations~\eqref{eqn:estimate_P} and~\eqref{eqn:invariant} is that we replace $\mathbf{D}^{-1}\mathbf{A}$ with $\mathbf{S}^{(\ell)}$ in equation~\eqref{eqn:estimate_P}. Recall that $\mathbf{S}^{(\ell)}$ is an unbiased estimator for the $\ell$-th transition probability matrix $\left(\mathbf{D}^{-1}\mathbf{A}\right)^{\ell}$. Therefore, Lemma~\ref{lem:invariant} ensures that $\hat{\mathbf{T}}^{(\ell)}$ is also an unbiased estimator for $\mathbf{T}^{(\ell)}$. Consequently, $\hat{\mathbf{P}} = \sum_{\ell=0}^L w_\ell\hat{\mathbf{T}}^{(\ell)}$ is an unbiased estimator for the propagation matrix $\mathbf{P}$. Finally, to minimize the  overall time complexity of Algorithm~\ref{alg:bp}  in Lemma~\ref{lem:time}, the general principle is to balance the costs of the Monte-Carlo and the Reverse Push propagations.  In particular, we have the following Theorem.

\begin{theorem}
\label{thm:main}
By setting {\mycomment $n_r = O\left( \frac{1}{ \varepsilon} \sqrt{\frac{d\log n}{|V_t|} }\right)$ }and $r_{max} = O\left(\varepsilon \sqrt{\frac{d}{|V_t|\log n}} \right)$, Algorithm~\ref{alg:bp} produces an estimator $\hat{\mathbf{P}}$ of the propagation matrix $\mathbf{P}$, such that for any $s \in V_t$ and $k=0,\ldots, F-1$, the probability that
$ \left|\hat{\mathbf{P}}(s,k) - \mathbf{P}(s,k) \right| \le d(s)^r \varepsilon$ is at least $1 - {\frac{1}{n}}.$
The time complexity is bounded by {\hz $O\left(L\cdot \frac{\sqrt{L|V_t| d \log{(nL)}}}{\varepsilon}\cdot F\right)$}. 
\end{theorem}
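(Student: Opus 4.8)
The plan is to prove the error bound by a concentration argument on the random‑walk (Monte‑Carlo) part of $\hat{\mathbf{P}}$, and to obtain the time bound by substituting the prescribed $n_r$ and $r_{max}$ into Lemma~\ref{lem:time} and checking that its two terms balance.

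\emph{Setting up the i.i.d.\ sum.} Fix $s\in V_t$ and a coordinate $k$. Unbiasedness, $\mathbb{E}[\hat{\mathbf{P}}(s,k)]=\mathbf{P}(s,k)$, is already available from Lemma~\ref{lem:invariant} together with $\mathbf{S}^{(\ell)}$ being unbiased for $(\mathbf{D}^{-1}\mathbf{A})^\ell$, so only concentration remains. Write the deviation as a normalized sum of i.i.d.\ variables: if the $j$-th walk from $s$ occupies nodes $W_j^{(0)}=s,W_j^{(1)},\dots,W_j^{(L)}$, set $Z_j=\sum_{\ell=0}^{L}w_\ell\sum_{t=0}^{\ell}\mathbf{R}^{(t)}(W_j^{(\ell-t)},k)$; then $\hat{\mathbf{P}}(s,k)-\mathbf{P}(s,k)=\tfrac{d(s)^r}{n_r}\sum_{j=1}^{n_r}\bigl(Z_j-\mathbb{E}[Z_j]\bigr)$. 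Since Reverse Push terminates with $|\mathbf{R}^{(t)}(u,k)|\le r_{max}$ for every $t\le L-1$ (and $\mathbf{R}^{(L)}\equiv\mathbf{0}$) and since $\sum_\ell w_\ell\le 1$, each $Z_j$ lies in $[-(L+1)r_{max},\,(L+1)r_{max}]$.

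\emph{The variance estimate (the crux).} The key is a bound of the form $\mathrm{Var}(Z_j)=O(L\,r_{max})$ — linear, not quadratic, in $r_{max}$; this is precisely the ``variance is reduced by a factor $r_{max}$'' phenomenon noted before the theorem, and it is what ultimately yields a square‑root rather than a cube‑root trade‑off. To get it, regroup $Z_j=\sum_{i=0}^{L}h_i(W_j^{(i)})$ with $h_i(u):=\sum_{t=0}^{L-i}w_{i+t}\mathbf{R}^{(t)}(u,k)$, so $|h_i(u)|\le r_{max}$; expand $\mathbb{E}[Z_j^2]$ over ordered pairs of walk positions, bound the later factor of each product by $r_{max}$, and collapse the transition‑probability sums using that each row of $(\mathbf{D}^{-1}\mathbf{A})^i$ sums to $1$. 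What remains is $O(L)\cdot r_{max}\cdot\sum_i\sum_u(\mathbf{D}^{-1}\mathbf{A})^i(s,u)h_i(u)=O(L)\cdot r_{max}\cdot\mathbb{E}[Z_j]$, and by Lemma~\ref{lem:invariant} $\mathbb{E}[Z_j]=\sum_\ell w_\ell\bigl(d(s)^{-r}\mathbf{T}^{(\ell)}(s,k)-\mathbf{Q}^{(\ell)}(s,k)\bigr)\le d(s)^{-r}\mathbf{P}(s,k)$, which the column‑normalization of $\mathbf{D}^{-r}\mathbf{X}$ and $\sum_\ell w_\ell\le 1$ force to be $O(1)$. (For feature matrices with negative entries one first splits $\mathbf{X}$ into positive and negative parts, or carries $|\mathbf{R}^{(t)}|$ through the bound.)

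\emph{Concentration and balancing.} With both a variance proxy $O(L\,r_{max})$ and a range $O(L\,r_{max})$ in hand, Bernstein's inequality gives failure probability $\exp\bigl(-\Omega(n_r\varepsilon^2/(L\,r_{max}))\bigr)$ for $\varepsilon\le 1$; driving this below $1/n$ — a union bound over the $L+1$ levels (if one wants a per‑level guarantee on $\hat{\mathbf{T}}^{(\ell)}$), or a slightly finer accounting of the bad event, is what replaces $\log n$ by $\log(nL)$ — needs $n_r=\Omega(L\,r_{max}\log(nL)/\varepsilon^2)$, and the prescribed $n_r$ and $r_{max}$ satisfy this relation. Substituting them into $O\bigl(L^2|V_t|n_rF+LdF/r_{max}\bigr)$ from Lemma~\ref{lem:time} and simplifying shows that both the Monte‑Carlo term and the Reverse‑Push term equal $O\bigl(L\cdot\tfrac{\sqrt{L|V_t|d\log(nL)}}{\varepsilon}\cdot F\bigr)$ up to $L$‑factor slack absorbed by the $O(\cdot)$, i.e.\ the prescribed parameters sit at the balance point of the two costs. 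The main obstacle is the variance estimate of paragraph three: securing the factor $r_{max}$ rather than $r_{max}^2$ is what makes the trade‑off work, and it hinges on recognizing $\sum_i\mathbb{E}[h_i(W_j^{(i)})]$ as $\mathbb{E}[Z_j]$ via the push invariant and then bounding it by the normalization of the initial residue.
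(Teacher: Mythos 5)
Your proposal is correct and follows essentially the same route as the paper: concentrate the Monte-Carlo part of $\hat{\mathbf{P}}(s,k)$ using the fact that each walk's residue-weighted contribution has range $O(L\,r_{max})$, pay a $\log(nL)$ for the union bound over levels, and then balance the two terms of Lemma~\ref{lem:time} to fix $r_{max}=\Theta\bigl(\varepsilon\sqrt{d/(L|V_t|\log(nL))}\bigr)$. The only difference is presentational: where you derive the variance proxy $O(L\,r_{max})$ explicitly by regrouping $Z_j$ and invoking the push invariant, the paper obtains the same linear-in-$r_{max}$ denominator by substituting the bound $|\mathbf{R}^{(t)}(u,k)|\le r_{max}$ directly into its Chernoff inequality for $[0,r]$-bounded variables with $r=\mu=(\ell+1)r_{max}$ — your version is, if anything, the more carefully justified of the two.
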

For fully-supervised learning, we have $|V_t| = n $ and thus the time complexity of GBP becomes {\hz $O\left(L\cdot \frac{\sqrt{Lm \log{(nL)}}}{\varepsilon}\cdot F\right)$}. 
In practice, we can also make a trade-off between efficiency and accuracy by manipulating the  push threshold $r_{max}$ and the number of walks {\mycomment $n_r$ }.

\paragraph{Parallelism of GBP.} 
The Bidirectional Propagation algorithm is embarrassingly parallelizable: We can generate the random walks on multiple nodes and perform Reverse Push on multiple features in parallel. After we obtain the Generalized PageRank matrix $\mathbf{P}$, it is trivially to construct mini-batches for training the neural networks. 

\section{Experiments}
\label{sec:exp}

\begin{table}[ht]
    \vspace{-4mm}
    \begin{center}\small
    \caption{Dataset statistics.}
    \label{dataset-table}
    \begin{tabular}{llrrrrr}
    \toprule
    Dataset & Task & Nodes & Edges & Features & Classes &  Label rate \\
    \midrule
    Cora & multi-class & 2,708 & 5,429      & 1,433         & 7 & 0.052         \\
    Citeseer      & multi-class& 3,327       & 4,732       & 3,703         & 6 & 0.036          \\
    Pubmed        & multi-class& 19,717      & 44,338      & 500           & 3  & 0.003 \\
    PPI           & multi-label& 56,944      & 818,716     & 50            & 121 & 0.79          \\
    Yelp          & multi-label& 716,847     & 6,977,410   & 300           & 100 & 0.75         \\
    Amazon        & multi-class& 2,449,029   & 61,859,140  & 100           & 47 & 0.70       \\
    Friendster    & multi-class& 65,608,366   & 1,806,067,135 & 100 (random) & 500 & 0.001           \\
    \bottomrule    
    \end{tabular}
    \end{center}
    \vspace{-4mm}
\end{table}

\paragraph{Datasets.}
We use seven open graph datasets with different size: three citation networks Cora, Citeser and Pubmed~\cite{DBLP:journals/aim/SenNBGGE08}, a Protein-Protein interaction network PPI~\cite{hamilton2017inductive}, a customer interaction network Yelp~\cite{graphsaint-iclr20}, a co-purchasing networks Amazon~\cite{DBLP:conf/kdd/ChiangLSLBH19} and a large social network Friendster~\cite{DBLP:conf/icdm/YangL12}. 
Table~\ref{dataset-table} summarizes the statistics of the datasets. We first evaluate GBP's performance for transductive semi-supervised learning on the three popular citation networks (Cora, Citeseer, and Pubmed). Then we compare GBP with scalable GNN methods three medium to large graphs PPI, Yelp, Amazon in terms of inductive learning ability. Finally, we present the first empirical study of transductive semi-supervised on billion-scale network Friendster. 


\paragraph{Baselines and detailed setup.} We adopt three state-of-the-art GNN methods GCN~\cite{DBLP:conf/iclr/KipfW17}, GAT~\cite{velickovic2018graph}, GDC~\cite{DBLP:conf/nips/KlicperaWG19} and APPNP~\cite{klicpera_predict_2019} as the baselines for evaluation on small graphs. We also use one state-of-the-art scalable GNN from each of the three categories: LADIES (layer sampling)~\cite{DBLP:conf/nips/ZouHWJSG19}, GraphSAINT (graph sampling)~\cite{graphsaint-iclr20}, SGC and PPRGo (linear model)~\cite{pmlr-v97-wu19e,mlg2019_50}. 

We implement GBP in PyTorch and C++, and employ initial residual connection~\cite{DBLP:conf/cvpr/HeZRS16} across the hidden layers to facilitate training. 
For simplicity, we use the Personalized PageRank weights ($w_\ell = \alpha(1-\alpha)^\ell$ for some hyperparameter $\alpha \in (0,1)$). As we shall see, this weight sequence generally achieves satisfying results on graphs with real-world features. On the Friendster dataset, where the features are random noises, we use both Personalized PageRank and transition probability ($w_L =1, w_0=,\ldots,=w_{L-1}=0$) for GBP. We set $L=4$ across all datasets. Table~\ref{table:parameters} summaries other hyper-parameters of GBP on different datasets. We use Adam optimizer to train our model, with a maximum of 1000 epochs and a learning rate of 0.01. 
For a fair comparison, we use the officially released code of each baseline (see the supplementary materials for URL and commit numbers) and perform a grid search to tune hyperparameters for models. All the experiments are conducted on a machine with an NVIDIA TITAN V GPU (12GB memory), Intel Xeon CPU (2.20 GHz), and 256GB of RAM.
\begin{table}[ht]
    \caption{Hyper-parameters of GBP. $r_{max}$ is the Reverse Push Threshold, $w$ is the number of random walks from the training nodes, $w_\ell$ is the weight sequence, $r$ is the Laplacian parameter in the convolutional matrix $\mathbf{D}^{r-1}\mathbf{A}\mathbf{D}^{-r}$.} 
     \label{table:parameters}
     \begin{small}
     \vspace{-2mm}
     \begin{center}
\begin{tabular}{llllllllll}
\toprule
Data        & Dropout & \begin{tabular}[c]{@{}l@{}}Hidden\\ dimension\end{tabular} & $L_2$   & \begin{tabular}[c]{@{}l@{}}Batch\\ size\end{tabular}  & $r_{max}$ & $w$     & $w_\ell$ & $r$  \\
\midrule
Cora                                                         & 0.5     & 64                                                         & 5e-4 & 16     & 1e-4     & 0     & $\alpha(1-\alpha)^\ell, \alpha=0.1$  & 0.5   \\
Citeseer                                                       & 0.5     & 64                                                         & 2e-1 & 64     & 1e-5     & 0     & $\alpha(1-\alpha)^\ell, \alpha=0.15$    & 0.4   \\
Pubmed                                                       & 0.5     & 128                                                         & 5e-4 & 16     & 1e-5     & 0    & $\alpha(1-\alpha)^\ell, \alpha=0.05$    & 0.5   \\
PPI                                                         & 0.1     & 2048                                                       & -    & 2048   & 5e-7     & 0    & $\alpha(1-\alpha)^\ell, \alpha=0.3$    & 0   \\
Yelp                                                        & 0.1     & 2048                                                       & -    & 30000  & 5e-7     & 0    & $\alpha(1-\alpha)^\ell, \alpha=0.9$    & 0.3 \\
Amazon                                                        & 0.1     & 1024                                                       & -    & 100000 & 1e-7     & 0     & $\alpha(1-\alpha)^\ell, \alpha=0.2$   & 0.2 \\
Friendster (PPR)                                                   & 0.1     & 128                                                        & -    & 2048   & 4e-8     & 10000 & $\alpha(1-\alpha)^\ell, \alpha=0.1$      & 0.5 \\
Friendster                                                      & 0.1     & 128                                                        & -    & 2048   & 4e-8     & 10000 &   $\{0,0,0,1\}$   & 0.5 \\
\bottomrule
\end{tabular}

\end{center}
\vspace{-2mm}
\end{small}
\vspace{-0.35cm}
\end{table}

\begin{table}[ht]
    \begin{center}
    \caption{Results on Cora, Citeseer and Pubmed.}
    \label{transductive-table}
    \begin{tabular}{llll}
    \toprule
    Method & Cora & Citeseer & Pubmed \\
    \midrule
    GCN & 81.5 $\pm$ 0.6 & 71.3 $\pm$ 0.4 & 79.1 $\pm$ 0.4 \\
    GAT & 83.3 $\pm$ 0.8& 71.9 $\pm$ 0.7 & 78.0 $\pm$ 0.4\\
    GDC & 83.3 $\pm$ 0.2 & 72.2 $\pm$ 0.3 & 78.6 $\pm$ 0.4 \\
    APPNP & 83.3 $\pm$ 0.3 & 71.4 $\pm$ 0.6 & 80.1 $\pm$ 0.2 \\
    SGC    & 81.0 $\pm$ 0.1 & 71.8 $\pm$ 0.1 & 79.0 $\pm$ 0.1 \\
    LADIES& 79.6 $\pm$ 0.5 & 68.6 $\pm$ 0.3 & 77.0 $\pm$ 0.5 \\
    PPRGo & 82.4 $\pm$ 0.2 & 71.3 $\pm$ 0.3 & 80.0 $\pm$ 0.4 \\
    GraphSAINT& 81.3 $\pm$ 0.4 & 70.5 $\pm$ 0.4 & 78.2 $\pm$ 0.8 \\
    \midrule
    GBP   & \textbf{83.9 $\pm$ 0.7} & \textbf{72.9 $\pm$ 0.5} & \textbf{80.6 $\pm$ 0.4} \\
    \bottomrule
    \end{tabular}
    \end{center}
    \vspace{-0.35cm}
\end{table}

\paragraph{Transductive learning on small graphs.}
Table~\ref{transductive-table}  shows the results for the semi-supervised transductive node classification task on the three small standard graphs Cora, Citeseer, and
 Pubmed. Following~\cite{DBLP:conf/iclr/KipfW17}, we apply the standard fixed training/validation/testing split with 20 nodes per class for training, 500 nodes for validation and 1,000 nodes  for testing.   For each method, we set the number of hidden layers to 2 and take the mean accuracy with the standard deviation after ten runs. 
We observe that GBP outperforms APPNP (and consequently all other baselines) across all datasets. 
For the scalable GNNs, SGC is outperformed by the vanilla GCN due to the simplification~\cite{pmlr-v97-wu19e}.
On the other hand, the results of LADIES and GraphSAINT are also not at par with the non-scalable GNNs such as GAT or APPNP,
which suggests that the sampling technique alone might not be sufficient to achieve satisfying performance on small graphs. 

\vspace{-2mm}
\begin{table}[ht]
    \begin{center}
    \begin{small}
    \vspace{-2mm}
    \caption{Results of inductive learning with scalable GNNs.}
    \label{inductive-table}
    \begin{tabular}{|c|l|l|r|l|r|l|r|}
    \hline
    \multicolumn{2}{|l|}{\multirow{2}{*}{}} & \multicolumn{2}{c|}{4-layer} & \multicolumn{2}{c|}{6-layer} & \multicolumn{2}{c|}{8-layer} \\ \cline{3-8} 
    \multicolumn{2}{|l|}{}    & F1-score     & Time (s) & F1-score  & Time (s) & F1-score  & Time (s)          \\ \hline
    \multirow{5}{*}{PPI}     & SGC  & 65.7 $\pm$ 0.01 & \textbf{76} & 62.4 $\pm$ 0.01 & 173 & 57.8 $\pm$ 0.01 & 295 \\ 
& LADIES  & 57.9 $\pm$ 0.30 & 187 & 59.4 $\pm$ 0.25 & 206 & 57.4 $\pm$ 0.24 & 315 \\ 
& PPRGo  & 61.5 $\pm$ 0.13 & 866 & 61.1 $\pm$ 0.02 & 1976 & 55.1 $\pm$ 0.19 & 1080 \\ 
 & GraphSAINT   & 99.2 $\pm$ 0.05 & 1291 & \textbf{99.4 $\pm$ 0.03} & 1961 & \textbf{99.3 $\pm$ 0.01} & 2615 \\ 
 & GBP         & \textbf{99.3 $\pm$ 0.02} & 117  & 99.3 $\pm$ 0.03 & \textbf{167} & \textbf{99.3 $\pm$ 0.01} & \textbf{220}  \\ \hline
    \multirow{5}{*}{Yelp}    & SGC  & 41.5 $\pm$ 0.21 & 43 & 36.8 $\pm$ 0.33 & 70 & 34.8 $\pm$ 0.52 & 92 \\ 
& LADIES  & 27.3 $\pm$ 0.56 & 34 & 28.5 $\pm$ 0.97 & 39 & 30.0 $\pm$ 0.32 & 51 \\ 
& PPRGo  & 64.0 $\pm$ 0.16 & 550 & 63.7 $\pm$ 0.71 & 1215 & 63.4 $\pm$ 0.49 & 1665 \\ 
 & GraphSAINT   &  64.7 $\pm$ 0.08 & 712 & 62.0 $\pm$ 0.10 & 996 & 59.1 $\pm$ 0.35 & 1298    \\ 
 & GBP         & \textbf{65.4 $\pm$ 0.03} & \textbf{19}   & \textbf{65.5 $\pm$ 0.03} & \textbf{30}  & \textbf{65.4 $\pm$ 0.05} & \textbf{37}  \\ \hline
    \multirow{5}{*}{Amazon}  & SGC  & 90.4 $\pm$ 0.01 & 233 & 89.9 $\pm$ 0.03 & 284 & 89.7 $\pm$ 0.03 & 342 \\ 
& LADIES  & 85.4 $\pm$ 0.14 & 734 & 85.2 $\pm$ 0.20 & 784 & 84.6 $\pm$ 0.09 & 1421 \\ 
& PPRGo  & 83.3 $\pm$ 0.51 & 2775 & 83.3 $\pm$ 0.09 & 5206 & 81.6 $\pm$ 0.22 & 9300 \\ 
 & GraphSAINT   & \textbf{91.5 $\pm$ 0.01} & 957 & 91.3 $\pm$ 0.05 & 1228 & 91.4 $\pm$ 0.05  &  2618 \\ 
 & GBP         & \textbf{91.5 $\pm$ 0.01} & \textbf{225}  & \textbf{91.5 $\pm$ 0.01} & \textbf{243}  & \textbf{91.6 $\pm$ 0.01} & \textbf{300}  \\ \hline
    \end{tabular}
     \vspace{-2mm}
    \end{small}
    \end{center}
    \vspace{-0.35cm}
\end{table}

\paragraph{Inductive learning on medium to large graphs.}
Table~\ref{inductive-table} reports the F1-score and running time (precomputation + training) of each method with various depths on three large datasets PPI, Yelp, and Amazon. For each dataset, we set the hidden dimension to be the same across all methods: 2048(PPI), 2048(Yelp), and 1024(Amazon). We use “fixed-partition” splits for each dataset, following~\cite{graphsaint-iclr20,DBLP:conf/kdd/ChiangLSLBH19} (see the supplementary materials for further details).
The critical observation is that GBP can achieve comparable performance as GraphSAINT does, with 5-10x less running time. This demonstrates the superiority of GBP's sub-linear time complexity. For PPRGo, it has a longer running time than other methods because of its expensive feature propagation per epoch. On the other hand, SGC and LADIES are also able to run faster than GraphSAINT; However, these two models' accuracy is not comparable to that of GraphSAINT and GBP. 

 Figure~\ref{fig:convergence} shows the convergence rate of each method, in which the time for data loading, pre-processing, validation set evaluation, and model saving are excluded. We observe that the convergence rate of GBP and SGC is much faster than that of LADIES and GraphSAINT, which is a benefit from decoupling the feature propagation and the neural networks. 

\begin{figure}[ht]
    \begin{center}
     \includegraphics[height=33mm]{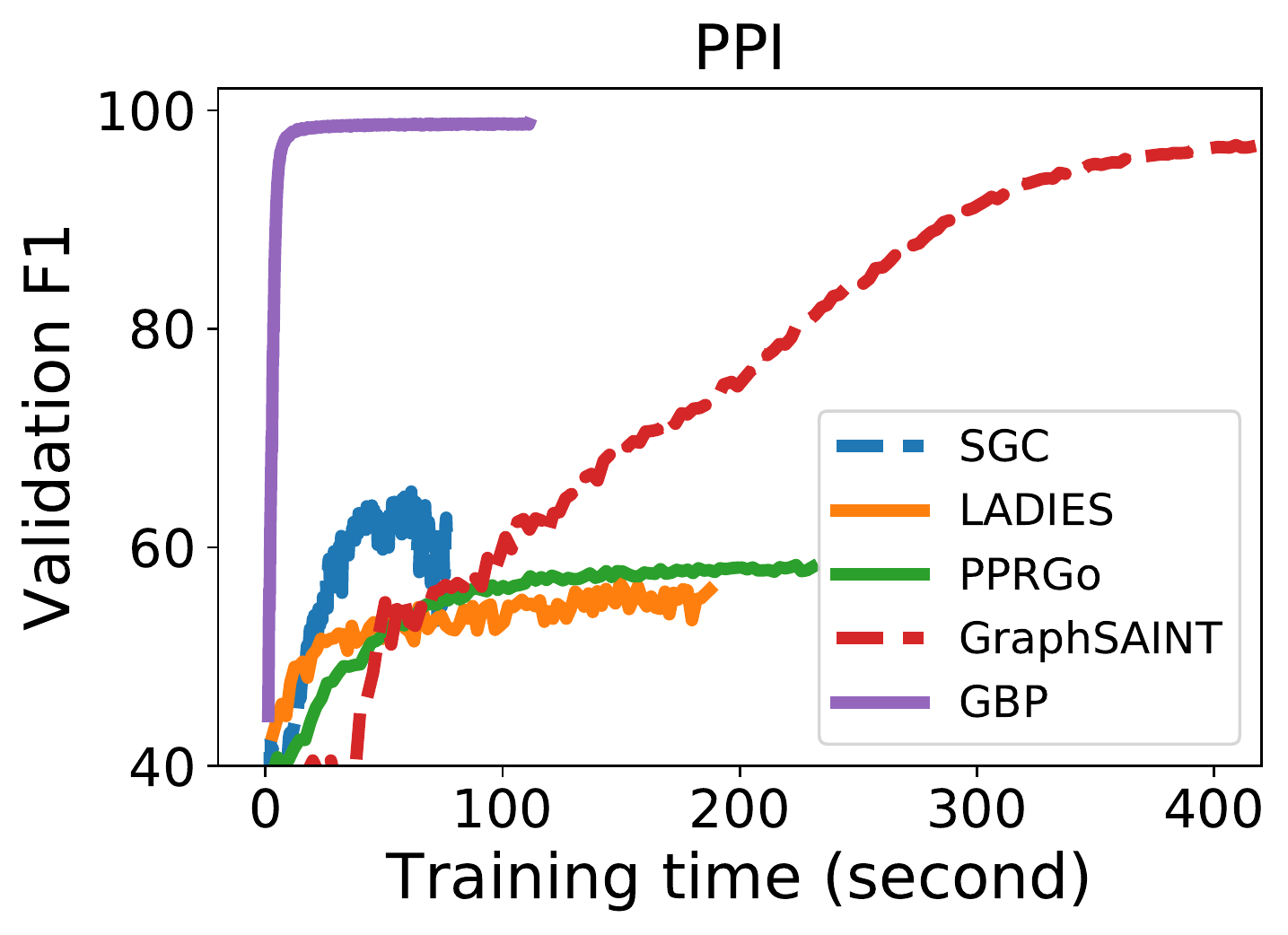}
     \includegraphics[height=33mm]{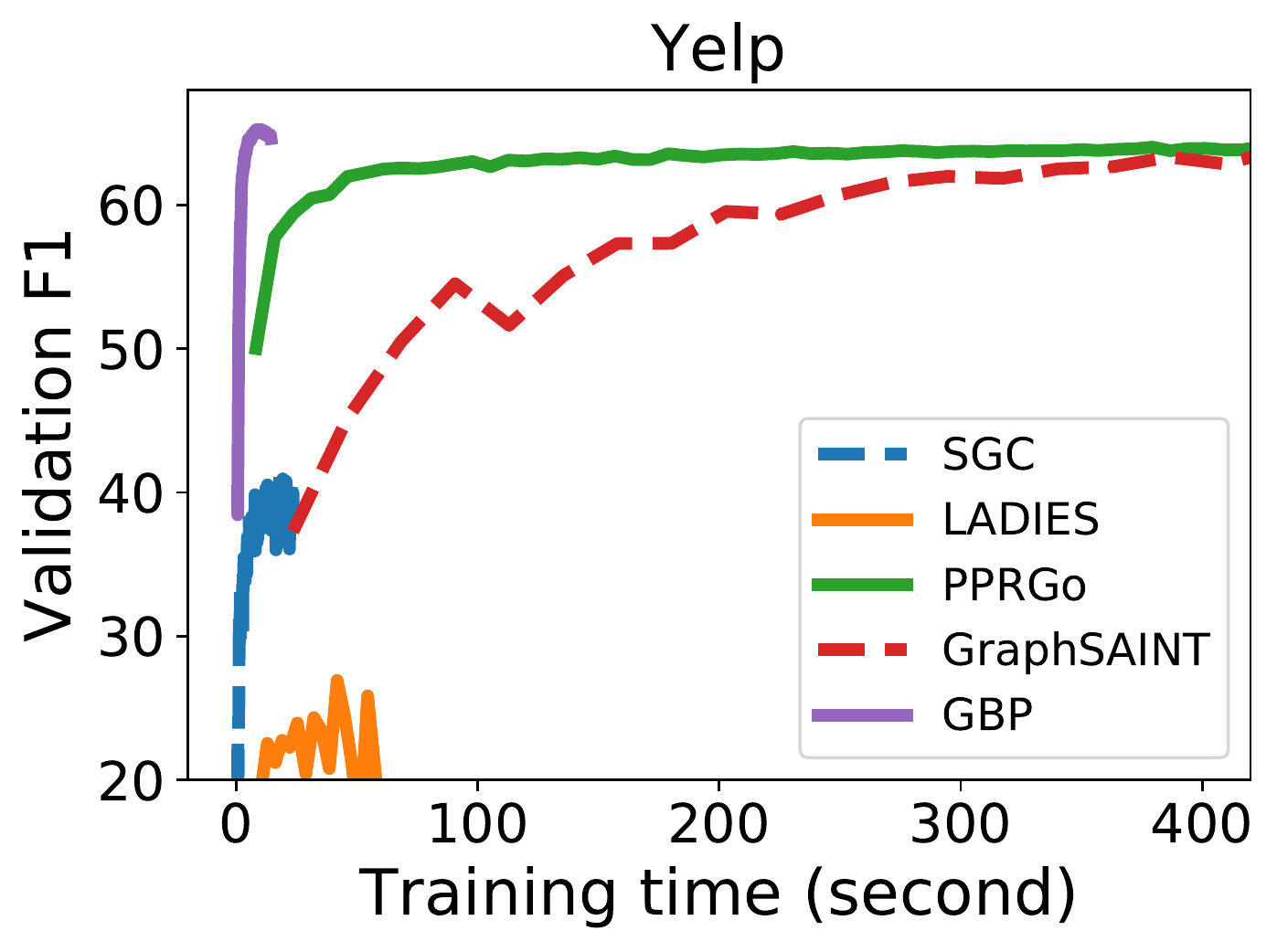}
     \includegraphics[height=33mm]{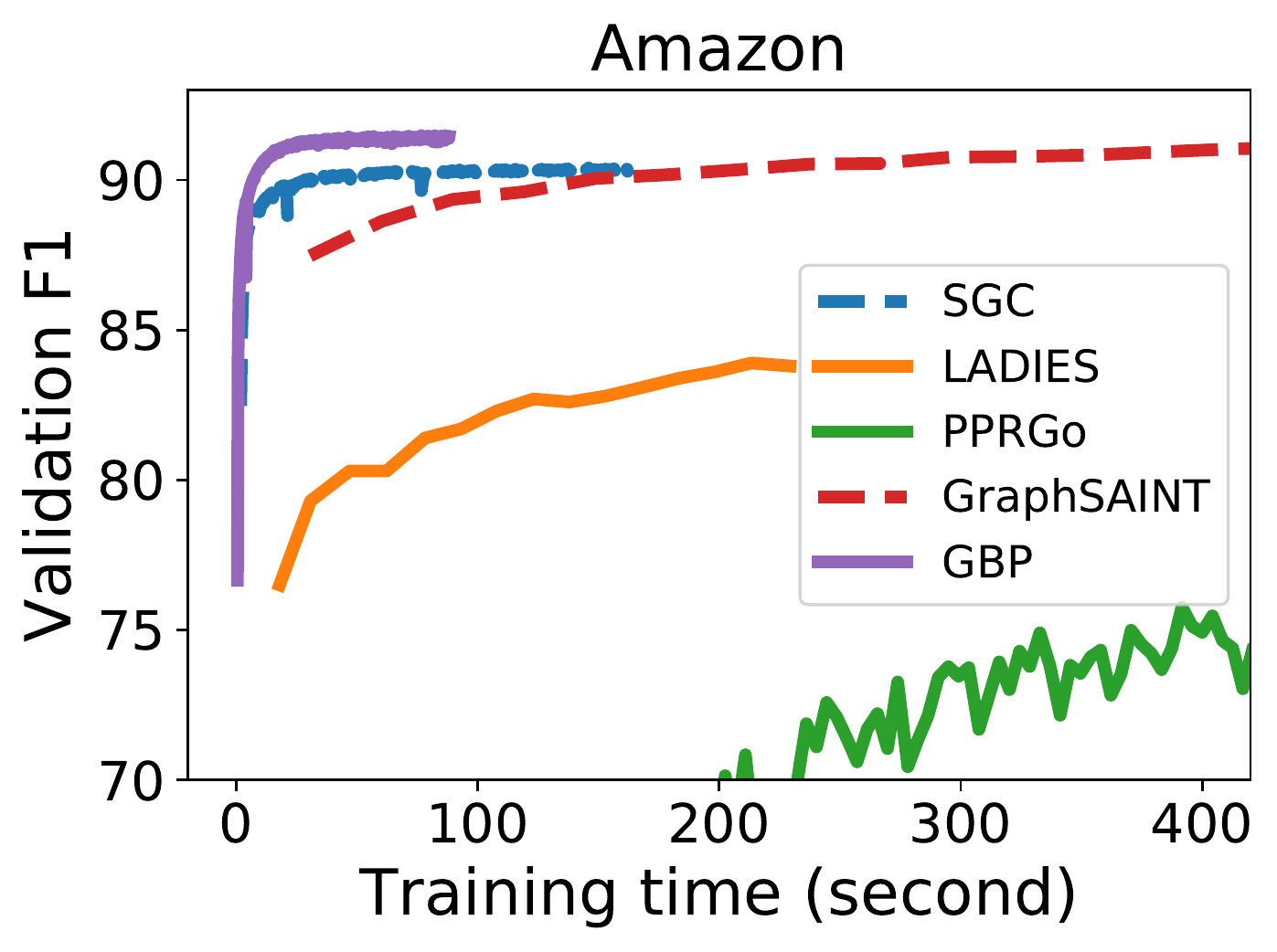}
      \vspace{-2mm}
     \caption{Convergence curves of 4-layer models.} 
     \vspace{-2mm}
     \label{fig:convergence}
    \end{center}
    \vspace{-0.35cm}
\end{figure}




\paragraph{Transductive semi-supervised learning on billion-scale graph Friendster.}
Finally, we perform the first empirical evaluation of scalable GNNs on a billion-scale graph Friendster. We extracted the top-500  ground-truth communities from \cite{DBLP:conf/icdm/YangL12} and use the community ids as the labels of each node. Note that one node may belong to multiple communities, in which case we pick the largest community as its label. The goal is to perform multi-class classification with only the graph structural information. This setting has been adapted in various works on community detection~\cite{DBLP:conf/www/LeskovecLM10,DBLP:conf/kdd/KlosterG14,DBLP:conf/sigmod/YangXWBZL19}. For each node, we generate a sparse random feature by randomly set one entry to be $1$ in an $d$-dimensional all-zero vector. Note that even with a random feature matrix, GNNs are still able to extract structural information to perform the prediction~\cite{yang2020heterogeneous}. 
Among the labeled nodes, we use 50,000 nodes (100 from each class) for training, 15,982 for validation, and 25,000 for testing. Table~\ref{tbl:friendster} report the running time and F1-score of each method with feature dimension  $F = 10, 40, 70$ and $100$. We omit GraphSAINT and LADIES as they run out of the 256 GB memory even with the dimension $d$ set to $10$. 
We first observe that both GBP and SGC can capture the structural information with random features, while PPRGo and GBP(PPR) fail to converge. This is because Personalized PageRank emphasizes each node’s original feature (with $w_0$ being the maximum weight among $w_0,\ldots, w_L$) and, yet, the original features of Friendster are random noises.  We also point out that PPRGo starts to converge and  achieves an F1-score of 0.15 in 4500 seconds when the feature dimension is increased to 10000. On the other hand, we observe that  GBP can achieve a significantly higher F1-score with  less running time. Notably, on this 500-class classification task, GBP is able to achieve an F1-score of 0.79 with less than half an hour.

\vspace{-2mm}
\begin{table}[ht]
\begin{center}
\caption{Results for semi-supervised learning on Friendster.}
\vspace{-2mm}
\label{tbl:friendster}
\begin{small}
\setlength{\tabcolsep}{1.8mm}{
\begin{tabular}{|c|c|r|c|r|c|r|c|r|}
\hline
Dimension     & \multicolumn{2}{c|}{10}                                         & \multicolumn{2}{c|}{40}                                          & \multicolumn{2}{c|}{70}                                          & \multicolumn{2}{c|}{100}                                         \\ \hline
F1-score / Time & F1-score                            & \multicolumn{1}{c|}{Time} & F1-score                             & \multicolumn{1}{c|}{Time} & F1-score                             & \multicolumn{1}{c|}{Time} & F1-score                             & \multicolumn{1}{c|}{Time} \\ \hline
SGC           & \multicolumn{1}{r|}{2.0 $\pm$ 0.27} & 1130                      & \multicolumn{1}{r|}{12.9 $\pm$ 0.01} & 2930                      & \multicolumn{1}{r|}{27.1 $\pm$ 0.01} & 4549                      & \multicolumn{1}{r|}{40.2 $\pm$ 0.01} & 6379                      \\ \hline
PPRGo           & \multicolumn{1}{r|}{1.6 $\pm$ 0.01} & -                      & \multicolumn{1}{r|}{1.6 $\pm$ 0.01} & -                      & \multicolumn{1}{r|}{1.6 $\pm$ 0.01} & -                      & \multicolumn{1}{r|}{1.6 $\pm$ 0.01} & -                      \\ \hline
GBP(PPR)           & \multicolumn{1}{r|}{1.6 $\pm$ 0.01} & -                      & \multicolumn{1}{r|}{1.6 $\pm$ 0.01} & -                      & \multicolumn{1}{r|}{7.3 $\pm$ 0.20} & -                      & \multicolumn{1}{r|}{7.3 $\pm$ 0.12} & -                      \\ \hline
GBP           & \multicolumn{1}{r|}{\textbf{7.5 $\pm$ 0.10}} & \textbf{757}                      & \multicolumn{1}{r|}{\textbf{26.6 $\pm$ 0.04}}  & \textbf{863}                      & \multicolumn{1}{r|}{\textbf{50.3 $\pm$ 0.44}}  & \textbf{1392}                      & \multicolumn{1}{r|}{\textbf{79.7 $\pm$ 0.32}} & \textbf{1849}                      \\ \hline
\end{tabular}}
\end{small}
\vspace{-2mm}
\end{center}
\vspace{-0.35cm}
\end{table}



\section{Conclusion}
\label{sec:conclusion}
This paper presents GBP, a scalable GNN based on localized bidirectional propagation. Theoretically, GBP is the first method that achieves sub-linear time complexity for precomputation, training, and inference. The bidirectional propagation process computes a Generalized PageRank matrix that can express various existing graph convolutions. Extensive experiments on real-world graphs show that GBP obtains significant improvement over the state-of-the-art methods in terms of efficiency and performance. 
Furthermore, GBP is the first method that can scale to billion-edge networks on a single machine. For future work, an interesting direction is to extend GBP to heterogeneous networks.
\section*{Broader Impact}
The proposed GBP algorithm addresses the challenge of scaling GNNs on large graphs. We consider this algorithm a general technical and theoretical contribution, without any foreseeable specific impacts. For applications in bioinformatics, computer vision, and natural language processing, applying the GBP algorithm may improve the scalability of existing GNN models. We leave the exploration of other potential impacts to future work.
\begin{ack}
\vspace{-2mm}
{\mycomment
Ji-Rong Wen was supported by National Natural Science Foundation of China (NSFC) No.61832017, and by Beijing Outstanding Young Scientist Program NO. BJJWZYJH012019100020098.
Zhewei Wei was supported by NSFC No. 61972401 and No. 61932001, by the Fundamental Research Funds for the Central Universities and the Research Funds of Renmin University of China under Grant 18XNLG21, and by Alibaba Group through Alibaba Innovative Research Program.
Ye Yuan was supported by NSFC No. 61932004 and No. 61622202, and by FRFCU No. N181605012. 
Xiaoyong Du was supported by NSFC No. U1711261.

}
\end{ack}
\bibliographystyle{acm}
\bibliography{references}
\clearpage
\appendix

\section{Proofs}
We need the following Chernoff Bound for bounded i.i.d. random variables.
\begin{lemma}[Chernoff Bound~\cite{DBLP:journals/im/ChungL06}]
\label{lem:chernoff}
Consider a set $\{x_i\}$ ($i \in [1, n_r]$) of i.i.d.\ random variables with mean $\mu$ and $x_i \in [0, r]$, we have
{\mycomment 

\begin{equation}
\label{eqn:chernoff}
\Pr\left[\left|{\frac{1}{n_r}}\sum_{i=1}^{n_r} x_i - \mu\right| \geq \varepsilon \right] \leq \exp\left(-\frac{n_r \cdot \varepsilon^2}{r\left( \frac{2}{3}\varepsilon +2\mu\right)}\right).
\end{equation}

}

\end{lemma}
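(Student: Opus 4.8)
The plan is to prove the inequality by the standard Cramér--Chernoff (exponential Markov) method, handling the two tails separately and then combining them. Write $S = \sum_{i=1}^{n_r} x_i$, so that the event in question is $\{\,|S - n_r\mu| \ge n_r\varepsilon\,\}$. After rescaling $x_i \mapsto x_i/r$ one may assume $x_i \in [0,1]$ with mean $\mu/r$, which keeps the bookkeeping clean, but I will carry the parameter $r$ through. I would first attack the upper tail $\Pr[S \ge n_r(\mu+\varepsilon)]$, since it turns out to reproduce the stated right-hand side \emph{exactly}, and then show that the lower tail is dominated by the same quantity.

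For any $t>0$, Markov's inequality applied to $e^{tS}$ together with independence gives $\Pr[S \ge n_r(\mu+\varepsilon)] \le e^{-t n_r(\mu+\varepsilon)}\,(\mathbf{E}[e^{tx_1}])^{n_r}$. The key estimate is a bound on the single-variable moment generating function. Since $x \mapsto e^{tx}$ is convex, on $[0,r]$ it lies below the chord through $(0,1)$ and $(r,e^{tr})$, so $e^{tx} \le 1 + \tfrac{x}{r}(e^{tr}-1)$; taking expectations and using $1+z \le e^{z}$ yields $\mathbf{E}[e^{tx_1}] \le \exp\big(\tfrac{\mu}{r}(e^{tr}-1)\big)$. (This captures the variance proxy $r\mu \ge \mathbf{E}[x_1^2] \ge \operatorname{Var}(x_1)$ automatically, which is why no separate variance estimate is needed.) Substituting gives the Bennett-type bound $\Pr[S \ge n_r(\mu+\varepsilon)] \le \exp\big(n_r[\tfrac{\mu}{r}(e^{tr}-1) - t(\mu+\varepsilon)]\big)$.

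Next I would optimize over $t$. The exponent is minimized at $e^{tr} = 1 + \varepsilon/\mu$, i.e.\ $t = \tfrac1r\ln(1+\varepsilon/\mu)$, which after simplification gives the Bennett inequality $\Pr[S \ge n_r(\mu+\varepsilon)] \le \exp\big(-\tfrac{n_r\mu}{r}\,h(\varepsilon/\mu)\big)$, where $h(u) = (1+u)\ln(1+u) - u$. The final step is to relax Bennett to the stated Bernstein form using the elementary scalar inequality $h(u) \ge \tfrac{u^2}{2 + \frac{2}{3}u}$ for $u \ge 0$; plugging in $u = \varepsilon/\mu$ turns the exponent into exactly $-\tfrac{n_r\varepsilon^2}{r(\frac23\varepsilon + 2\mu)}$, matching the claim. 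For the lower tail, the analogous argument with $t<0$ (or a one-sided Bennett bound using $\operatorname{Var}(x_1)\le r\mu$) yields the even smaller $\exp\big(-\tfrac{n_r\varepsilon^2}{2r\mu}\big)$, which is bounded above by the upper-tail estimate since $2r\mu \le r(2\mu + \tfrac23\varepsilon)$; a union bound over the two tails then delivers the two-sided statement.

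The main obstacle is pinning down the precise constants---specifically the $\tfrac23$ and the $2\mu$ in the denominator---which rests entirely on the Bennett-to-Bernstein relaxation $h(u) \ge u^2/(2+\tfrac23 u)$. Verifying this is the one genuinely nonroutine computation: setting $g(u) = (2+\tfrac23 u)h(u) - u^2$, one checks $g(0)=g'(0)=0$ and that $g''(u) = \tfrac43\big(\ln(1+u) + \tfrac{1}{1+u} - 1\big) \ge 0$ for $u\ge 0$ (the inner bracket vanishes at $0$ and has nonnegative derivative $\tfrac{u}{(1+u)^2}$), whence $g \ge 0$. A secondary, purely cosmetic point is that combining the two one-sided tails by a union bound formally introduces a factor $2$ in front of the exponential; this is consistent with the one-sided form in the cited survey and is harmless in the application, where the estimate is only invoked up to the constants hidden in the $O(\cdot)$ notation of Theorem~\ref{thm:main}.
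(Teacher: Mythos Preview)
The paper does not prove this lemma at all: it is quoted directly from the Chung--Lu survey and used as a black box in the proof of Theorem~\ref{thm:main}. So there is no ``paper's own proof'' to compare against; any correct derivation is acceptable.

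Your argument is the standard and correct route to this Bernstein-type bound: convexity of $e^{tx}$ on $[0,r]$ to control the MGF, optimization in $t$ to reach Bennett's inequality with $h(u)=(1+u)\ln(1+u)-u$, and then the relaxation $h(u)\ge u^2/(2+\tfrac{2}{3}u)$ to obtain the stated exponent. Your verification of that scalar inequality via $g(u)=(2+\tfrac{2}{3}u)h(u)-u^2$ and $g''(u)=\tfrac{4}{3}\bigl(\ln(1+u)+\tfrac{1}{1+u}-1\bigr)\ge 0$ is clean, and the lower-tail treatment (yielding the sharper $\exp(-n_r\varepsilon^2/(2r\mu))$, which is dominated by the upper-tail bound) is also fine. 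The one honest discrepancy you already flag---the missing factor $2$ from the union bound over two tails---is real: the statement as written in the paper matches only the one-sided form, and the two-sided version should carry a leading $2$. As you note, this is immaterial for the downstream application, where the bound is only used to set $n_r$ up to constants.
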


\subsection{Proof of Lemma 1}
{\hz 
We note that the Bidirectional Propagation process shown in Algorithm~\ref{alg:bp} can be divided into three phases: the Monte Carlo Propagation phase (Line 1-5 in Algorithm~\ref{alg:bp}), the Reverse Push Propagation phase (Line 6-13) and the combination phase (Line 14). In the combination phase, there are at most $O(|V_t|\cdot n_r)$ non-zero entries in each derived $\mathbf{S}^{(\ell)}$. Thus, the time cost to compute $\sum_{\ell=0}^L w_\ell\sum_{t=0}^{\ell} \mathbf{S}^{(\ell-t)} \mathbf{R}^{(t)}$ can be bounded by $O(L^2|V_t|n_r F)$. Additionally, adding $\sum_{\ell=0}^L w_\ell\sum_{t=0}^{\ell} \mathbf{S}^{(\ell-t)} \mathbf{R}^{(t)}$ to $\sum_{\ell=0}^L \mathbf{Q}^{(\ell)}$ costs $O\left(L|V_t|F\right)$.  Hence, the time cost of the combination phase is bounded by $O(L^2|V_t|n_r F)$.

In the Monte-Carlo Propagation phase of Algorithm~\ref{alg:bp}, we first generate $n_r$ random walks of length $L$ for each training/testing node $s\in V_t$ to estimate the $\ell$-th transition probability matrix $\mathbf{S}^{(\ell)}$, $\ell=0,\ldots, L$. Since the number of training/testing nodes is $|V_t|$, the total cost is bounded by $O(L|V_t|n_r)$. 


Furthermore, the time cost of the Reverse Push Propagation can be bounded by $\frac{LdF}{r_{max}}$, where $d$ denotes the average degree of the given graph. Before proving this cost, we first introduce Lemma~\ref{lem:column-sum} to bound the sum of each column in $\mathbf{Q}^{(\ell)}$.   
\begin{lemma}\label{lem:column-sum}
	After the termination of Algorithm~\ref{alg:bp}, the sum of the $k$-th column ($\forall k\in \{0,...,F-1\}$)  in $\mathbf{Q}^{(\ell)}$ ($\forall \ell \in \{0,...,L\}$) can be bounded as $\mathbb{E}\left[\sum_{u\in V} \left|\mathbf{Q}^{(\ell)}(u,k)\right| \right]\le 1$. 
\end{lemma}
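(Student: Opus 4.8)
The plan is to bound $\sum_{u\in V}|\mathbf{Q}^{(\ell)}(u,k)|$ by following the probability mass that the Reverse Push moves around: I will show that the reserve held at level $\ell$ can never hold more mass than the full, un-thresholded $\ell$-step propagation of $\mathbf{R}^{(0)}$ would place there, and that the latter is controlled by the normalized column mass of $\mathbf{R}^{(0)}$.

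The first step is a domination inequality: at every moment of the algorithm, and even in the presence of signed residues, $|\mathbf{R}^{(\ell)}(u,k)| \le \big((\mathbf{D}^{-1}\mathbf{A})^{\ell}\,|\mathbf{R}^{(0)}(\cdot,k)|\big)(u)$. This follows by a short induction on $\ell$: a single push replaces the residue $\mathbf{R}^{(\ell)}(u,k)$ of $u$ by the contributions $\mathbf{R}^{(\ell)}(u,k)/d(v)$ spread over its neighbours $v$, so in absolute value the residues created at level $\ell+1$ are dominated coordinate-wise by $\mathbf{D}^{-1}\mathbf{A}$ applied to the absolute residues created at level $\ell$, while the threshold test only ever moves mass out of a residue entry into the matching reserve entry. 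Since $\mathbf{Q}^{(\ell)}(u,k)$ is exactly the value of $\mathbf{R}^{(\ell)}(u,k)$ at the instant $u$ is pushed at level $\ell$ (or the terminal level-$L$ residue), the same bound holds for $|\mathbf{Q}^{(\ell)}(u,k)|$. In the non-negative case this can alternatively be read straight off Lemma~\ref{lem:invariant}: with the terminal matrices it gives $\mathbf{Q}^{(\ell)}(\cdot,k)=(\mathbf{D}^{-1}\mathbf{A})^{\ell}\mathbf{R}^{(0)}(\cdot,k)-\sum_{t=0}^{\ell}(\mathbf{D}^{-1}\mathbf{A})^{\ell-t}\mathbf{R}^{(t)}(\cdot,k)$, and every subtracted term is coordinate-wise non-negative. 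Summing over $u$ yields $\sum_{u}|\mathbf{Q}^{(\ell)}(u,k)| \le \|(\mathbf{D}^{-1}\mathbf{A})^{\ell}\,|\mathbf{R}^{(0)}(\cdot,k)|\|_{1}$.

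It then remains to see that iterating $\mathbf{D}^{-1}\mathbf{A}$ does not inflate this $\ell_1$-mass past the normalization value. I would argue this through the left fixed point of $\mathbf{D}^{-1}\mathbf{A}$: if $\pi$ is the vector with $\pi(u)=d(u)$, then $\pi^{\top}(\mathbf{D}^{-1}\mathbf{A})=\pi^{\top}$, so for every non-negative vector the \emph{degree-weighted} mass $\sum_{u}d(u)\,(\cdot)(u)$ is exactly conserved by each application of $\mathbf{D}^{-1}\mathbf{A}$; since $d(u)\ge 1$ this gives $\|(\mathbf{D}^{-1}\mathbf{A})^{\ell}\mathbf{v}\|_{1}\le\sum_{u}d(u)\,\mathbf{v}(u)$ for $\mathbf{v}\ge\mathbf{0}$. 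Applying this with $\mathbf{v}=|\mathbf{R}^{(0)}(\cdot,k)|$ and using the column-normalization of $\mathbf{R}^{(0)}$ — chosen precisely so that the relevant (degree-weighted) column mass equals $1$ — finishes the proof; the expectation in the statement is then vacuous, as the Reverse Push is deterministic.

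I expect the last step to be the genuine obstacle and the most fragile point. Because $\mathbf{D}^{-1}\mathbf{A}$ is only row-stochastic, a single push really can increase the plain $\ell_1$-mass of the residue — mass piles up on high-degree hubs before being redistributed — so $\|(\mathbf{D}^{-1}\mathbf{A})\mathbf{v}\|_{1}\le\|\mathbf{v}\|_{1}$ is \emph{false} for general $\mathbf{v}\ge\mathbf{0}$, and one may not simply invoke ``$\mathbf{D}^{-1}\mathbf{A}$ is stochastic''. The way out is to carry the degree weighting through the entire estimate, i.e.\ to prove the level-by-level bound $\sum_{v}d(v)\,|\mathbf{R}^{(\ell+1)}(v,k)|_{\text{created}}\le\sum_{u}d(u)\,|\mathbf{R}^{(\ell)}(u,k)|_{\text{pushed}}$, which holds because pushing $u$ sends forward degree-weighted mass $\sum_{v\in N(u)}d(v)\cdot|\mathbf{R}^{(\ell)}(u,k)|/d(v)=d(u)\,|\mathbf{R}^{(\ell)}(u,k)|$, and then telescope back to $\mathbf{R}^{(0)}$. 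Making sure the normalization in Algorithm~\ref{alg:bp} is read in the degree-weighted sense that makes this come out to $1$ is the one subtle commitment; everything else is routine bookkeeping.
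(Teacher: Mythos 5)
Your proposal follows the same skeleton as the paper's proof --- dominate $\mathbf{Q}^{(\ell)}(\cdot,k)$ by $(\mathbf{D}^{-1}\mathbf{A})^{\ell}$ applied to the (absolute) initial residue, then bound the total mass of the result --- but it diverges from the paper exactly at the step you flag as fragile, and your version of that step is the more defensible one. The paper disposes of the mass bound by asserting that ``the sum of each column in $(\mathbf{D}^{-1}\mathbf{A})^{\ell}$ equals $1$,'' i.e.\ $\sum_{u}(\mathbf{D}^{-1}\mathbf{A})^{\ell}(u,v)=1$, and then concludes $\sum_{u}|\mathbf{Q}^{(\ell)}(u,k)|\le\sum_{v}|(\mathbf{D}^{-r}\mathbf{X})(v,k)|=1$ using plain $L_1$ column normalization. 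That column-sum identity is what you correctly point out is false in general: $\mathbf{D}^{-1}\mathbf{A}$ is row-stochastic, its column sums are $\sum_{u\in N(v)}1/d(u)$, and these equal $1$ only for regular graphs (a path with self-loops already violates it). Your replacement --- conserving the degree-weighted mass via the left fixed point $\pi(u)=d(u)$ and using $d(u)\ge 1$ to pass back to the plain $\ell_1$ norm --- is a genuine repair, and your level-by-level accounting ($\sum_{v\in N(u)}d(v)\cdot|\mathbf{R}^{(\ell)}(u,k)|/d(v)=d(u)|\mathbf{R}^{(\ell)}(u,k)|$) also handles signed residues, which the paper's coordinate-wise inequality reading of Lemma~\ref{lem:invariant} does not explicitly do. The cost, as you say, is that the constant $1$ in the lemma is only reached if the \emph{ColumnNormalized} step in Algorithm~\ref{alg:bp} is read as degree-weighted normalization ($\sum_{v}d(v)|\mathbf{R}^{(0)}(v,k)|=1$); under plain $L_1$ normalization your argument yields the weaker bound $\sum_{v}d(v)|\mathbf{R}^{(0)}(v,k)|$, which can be as large as $d_{\max}$. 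So relative to the lemma as literally stated and the algorithm as literally written, neither proof closes the gap unconditionally --- but yours makes the missing hypothesis explicit rather than papering over it with a false stochasticity claim, and it degrades gracefully (to a degree-dependent constant) when that hypothesis fails.
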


\begin{proof}
As we shall prove in Lemma~\ref{lem:invariant}, any reserve matrix $\mathbf{Q}^{(\ell)}$ ($\forall \ell=0,...,L$) can be bounded as: 
\begin{align*}
	\mathbf{D}^r\cdot \mathbf{Q}^{(\ell)}\le \mathbf{T}^{(\ell)}=\left(\mathbf{D}^{r-1}\mathbf{A}\mathbf{D}^{-r}\right)^\ell \cdot \mathbf{X}=\mathbf{D}^r\cdot \left(\mathbf{D}^{-1}\mathbf{A}\right)^\ell \cdot \left(\mathbf{D}^{-r}\mathbf{X}\right), 
\end{align*}
following $\mathbf{Q}^{(\ell)}\le \left(\mathbf{D}^{-1}\mathbf{A}\right)^\ell \cdot \left(\mathbf{D}^{-r}\mathbf{X}\right)$. Hence, for $\forall k\in \{0,...,F-1\}$ and $\forall u \in V$, we have: 
\begin{align*}
\mathbf{Q}^{(\ell)}(u,k)\le \left[\left(\mathbf{D}^{-1}\mathbf{A}\right)^\ell \cdot \left(\mathbf{D}^{-r}\mathbf{X}\right)\right](u,k)=\sum_{v\in V}\left(\mathbf{D}^{-1}\mathbf{A}\right)^\ell(u,v)\cdot \left(\mathbf{D}^{-r}\mathbf{X}\right)(v,k). 
\end{align*} 
Consequently, the sum of each column in $\mathbf{Q}^{(\ell)}$ can be bounded by
\begin{equation}
	\begin{aligned}
&\sum_{u\in V} \left| \mathbf{Q}^{(\ell)}(u,k)\right| \le \sum_{u\in V}\sum_{v\in V}\left|\left(\mathbf{D}^{-1}\mathbf{A}\right)^\ell(u,v)\cdot \left(\mathbf{D}^{-r}\mathbf{X}\right)(v,k)\right|\\
&=\sum_{v\in V}\left|\left(\mathbf{D}^{-r}\mathbf{X}\right)(v,k)\right|\cdot \sum_{u\in V} \left(\mathbf{D}^{-1}\mathbf{A}\right)^\ell(u,v)=\sum_{v\in V}\left|\left(\mathbf{D}^{-r}\mathbf{X}\right)(v,k)\right|=1.
	\end{aligned}
\end{equation}
In the first equality, we apply the fact that each entry of $\left(\mathbf{D}^{-1}\mathbf{A} \right)^\ell$ is nonzero. In the second equality, we apply the property that the sum of each column in the $\ell$-hop reverse transition probability matrix $\left(\mathbf{D}^{-1}\mathbf{A}\right)^\ell$ equals $1$, where $\ell\in \{0,1,...\}$. And in the last equality, we use the fact that $\mathbf{D}^{-r}\mathbf{X}$ is column normalized before the Reverse Push Propagation phase (line 7 in Algorithm~\ref{alg:bp}). Thus, Lemma~\ref{lem:column-sum} follows. 

\end{proof}
Based on Lemma~\ref{lem:column-sum}, we can further bound the time cost of the Reverse Propagation process. Recall that in the Reverse Propagation phase of Algorithm~\ref{alg:bp}, we push the residue $\mathbf{R}^{(\ell)}(u,k)$ of node $u$ to its neighbors whenever $\left|\mathbf{R}^{(\ell)}(u,k) \right| > r_{max}$, $k=0,\ldots, F-1$. By Lemma~\ref{lem:column-sum}, for a given level $\ell$ and a given feature dimension $k$, the sum $\sum_{u\in V}\left|\mathbf{Q}^{(\ell)}(u,k)\right|\le 1$. Because $\mathbf{Q}^{\ell}=\mathbf{R}^{\ell}$ before we empty $\mathbf{R}^{\ell}$ (Line 13 in Algorithm~\ref{alg:bp}), we have $\sum_{u\in V}\left|\mathbf{R}^{(\ell)}(u,k)\right|\le 1$. Thus, there are at most $1/r_{max}$ nodes with residues larger than $r_{max}$. For random features, the average cost for this push operation is $d$, the average degree of the graph. Consequently, the cost of Reverse Push for a given level $\ell$ and a given feature dimension $k$ is $\frac{d}{r_{max}}$. Summing up $\ell=0,\ldots, L-1$ and $k=0,\ldots, F-1$, we can derive the total time cost of the Reverse Push Propagation phase as $\frac{LdF}{r_{max}}$. 

By summing up the time cost of the three phases mentioned above, the time complexity of Algorithm~\ref{alg:bp} is bounded by $O\left(L^2|V_t|n_r F +\frac{LdF}{r_{max}}\right)$, and Lemma~\ref{lem:time} follows.
}

\subsection{Proof of Lemma 2}
Let $\mathcal{RHS}$ denote the right hand side of equation~\eqref{eqn:invariant};
We prove the Lemma by induction. Recall that in Algorithm~\ref{alg:bp}, we initialize {\mycomment $\mathbf{Q}^{(t)} = 0$ and $\mathbf{R}^{(t)} = 0$ for $t = 0,\ldots,\ell$, and $\mathbf{R}^{(0)} = \mathbf{D}^{-r}\mathbf{X}$ }.
Consequently, we have
\begin{align*}
  \mathcal{RHS}
    = \mathbf{D}^r\left(\mathbf{D}^{-1}\mathbf{A}\right)^{\ell} \mathbf{R}^{(0)}
    = \mathbf{D}^r\left(\mathbf{D}^{-1}\mathbf{A}\right)^{\ell} \mathbf{D}^{-r}\mathbf{X}
    =\left(\mathbf{D}^{r-1}\mathbf{A}\mathbf{D}^{-r}\right)^\ell \mathbf{X} =  \mathbf{T}^{(\ell)},
\end{align*}
which is true by definition. 
Assuming Equation~\eqref{eqn:invariant} holds at some stage, we will show that the invariant still holds after a push operation on node $u$. More specifically, let $\mathbf{I}_{uk}\in\mathcal{R}^{n\times F}$ denote the matrix with entry at $(u,k)$ setting to 1 and the rest setting to zero. Consider a push operation on $u \in V$ and $k \in {0,\ldots, F-1}$ with $|\mathbf{R}^{(t)}(u,k)| > r_{max}$. We have two cases:

(1) If {\mycomment $t\le\ell-1$ }, we have $\mathbf{R}^{(t)}$ is decremented by $\mathbf{R}^{(t)}(u,k) \cdot \mathbf{I}_{uk}$ and $\mathbf{R}^{(t+1)}$ is incremented by $\frac{\mathbf{R}^{(t)}(u,k)}{d(v)} \cdot \mathbf{I}_{vk}$ for each $v\in N(u)$. Consequently, we have
    \begin{align*}
        \mathcal{RHS} &= \mathbf{T}^{(\ell)} + \mathbf{D}^{r} \cdot \left(\mathbf{D}^{-1}\mathbf{A}\right)^{\ell-t}(-\mathbf{R}^{(t)}(u,k)\cdot\mathbf{I}_{uk}) + \mathbf{D}^{r}(\mathbf{D}^{-1}\mathbf{A})^{\ell-t-1} \cdot \sum_{v \in \mathcal{N}(u)}\frac{\mathbf{R}^{(t)}(u,k)}{d(v)} \cdot\mathbf{I}_{vk}\\
        &= \mathbf{T}^{(\ell)} + \mathbf{R}^{(t)}(u,k)\cdot\mathbf{D}^{r}(\mathbf{D}^{-1}\mathbf{A})^{\ell-t-1} \cdot \left(\sum_{v \in \mathcal{N}(u)}\frac{1}{d(v)}\cdot\mathbf{I}_{vk}-\mathbf{D}^{-1}\mathbf{A}\mathbf{I}_{uk}\right) \\
        &= \mathbf{T}^{(\ell)} + \mathbf{R}^{(t)}(u,k)\cdot\mathbf{D}^{r}(\mathbf{D}^{-1}\mathbf{A})^{\ell-t-1} \mathbf{0} = \mathbf{T}^{(\ell)}.
    \end{align*}
    For the second last equation, we use the fact that $\sum_{v \in \mathcal{N}(u)}\frac{1}{d(v)}\cdot\mathbf{I}_{vk} = \mathbf{D}^{-1}\mathbf{A}\mathbf{I}_{uk}$.
    
(2) If {\mycomment $t=\ell$, we have $\mathbf{R}^{(\ell)}$ is decremented by $\mathbf{R}^{(\ell)}(u,k) \cdot \mathbf{I}_{uk}$ and $\mathbf{Q}^{(\ell)}$ is incremented by $\mathbf{R}^{(\ell)}(u,k) \cdot \mathbf{I}_{uk}$. Consequently, we have
    \begin{align*}
       \mathcal{RHS}&= \mathbf{T}^{(\ell)} + \mathbf{D}^{r} \cdot\left(-\mathbf{R}^{(\ell)}(u,k)\cdot\mathbf{I}_{uk}\right) + \mathbf{D}^{r}\cdot\left(\mathbf{R}^{(\ell)}(u,k)\cdot\mathbf{I}_{uk}\right)
        = \mathbf{T}^{(\ell)} + \mathbf{D}^{r}\cdot \mathbf{0} = \mathbf{T}^{(\ell)}.
    \end{align*}
Therefore, the induciton holds, and the Lemma follows.
}

\subsection{Proof of Theorem 1}

{\hz
Recall that $\mathbf{P}$ is the Generalized PageRank matrix that $\mathbf{P}=\sum_{\ell=0}^L w_\ell \mathbf{T}^{(\ell)}$. The weights $w_\ell$ satisfy $\sum_{\ell=0}^\infty w_\ell \le 1$. This suggests that for any $s\in V_t$ and $k=0,...,F-1$, by showing
\begin{align}\label{eqn:Tbound}
\left|\hat{\mathbf{T}}^{(\ell)}(s,k)-\mathbf{T}^{(\ell)}(s,k)\right|>d(s)^r\varepsilon	
\end{align}
holds with probability at most $\frac{1}{nL}$, Algorithm~\ref{alg:bp} also achieves the desired accuracy. Note that here we also apply the union bound: 
\begin{equation}\nonumber
\begin{aligned}
	&\Pr \left[\left|\hat{\mathbf{P}}(s,k)-\mathbf{P}(s,k)\right|\le d(s)^r\varepsilon \right]\ge \Pr \left[\bigcap_{\ell=0}^L\left|\hat{\mathbf{T}}^{(\ell)}(s,k)-\mathbf{T}^{(\ell)}(s,k)\right|\le d(s)^r\varepsilon \right]\\
	&=1\hspace{-1mm}-\hspace{-0.5mm}\Pr \left[\bigcup_{\ell=0}^L\left|\hat{\mathbf{T}}^{(\ell)}(s,k)\hspace{-0.5mm}-\hspace{-0.5mm}\mathbf{T}^{(\ell)}(s,k)\right|> d(s)^r\varepsilon \right] 
	\hspace{-1mm}\ge 1\hspace{-1mm}-\hspace{-1mm}\sum_{\ell=0}^L\Pr \left[\left|\hat{\mathbf{T}}^{(\ell)}(s,k)-\mathbf{T}^{(\ell)}(s,k)\right|> d(s)^r\varepsilon \right]. 
\end{aligned}	
\end{equation}

Recall that we define $\mathbf{T}^{(\ell)}=\mathbf{D}^r\cdot \left(\mathbf{Q}^{(\ell)} + \sum_{t=0}^{\ell} \left(\mathbf{D}^{-1}\mathbf{A}\right)^{\ell-t} \mathbf{R}^{(t)}\right)$. And Lemma~\ref{lem:invariant} ensures $\hat{\mathbf{T}}^{(\ell)}=\mathbf{D}^r\cdot \left(\mathbf{Q}^{(\ell)} + \sum_{t=0}^{\ell} \mathbf{S}^{(\ell-t)} \mathbf{R}^{(t)}\right)$ is an unbiased estimator for $\mathbf{T}^{(\ell)}$. Hence, we have: 
\begin{equation}\nonumber
\begin{aligned}
&\left|\hat{\mathbf{T}}^{(\ell)}(s,k)-\mathbf{T}^{(\ell)}(s,k)\right|=d(s)^r\cdot \left|\sum_{t=0}^{\ell}\left(\mathbf{S}^{(\ell-t)} \mathbf{R}^{(t)}\right)(s,k)-\sum_{t=0}^{\ell} \left(\left(\mathbf{D}^{-1}\mathbf{A}\right)^{\ell-t}\cdot \mathbf{R}^{(t)}\right)(s,k)\right|\\
&=d(s)^r\cdot \left|\sum_{t=0}^{\ell}\sum_{u\in V}\mathbf{S}^{(\ell-t)}(s,u) \mathbf{R}^{(t)}(u,k)-\sum_{t=0}^{\ell}\sum_{u\in V} \left(\mathbf{D}^{-1}\mathbf{A}\right)^{\ell-t}(s,u)\cdot \mathbf{R}^{(t)}(u,k)\right|. 
\end{aligned} 	
\end{equation}
According to Algorithm~\ref{alg:bp}, each entry in residue matrix $\mathbf{R}^{(\ell)}$ is bounded by $r_{max}$ after the Reverse Push Propagation phase. It follows: 
\begin{align*}
\left|\hat{\mathbf{T}}^{(\ell)}(s,k)\hspace{-0.5mm}-\hspace{-0.5mm}\mathbf{T}^{(\ell)}(s,k)\right|\hspace{-0.5mm}\le d(s)^r \hspace{-0.5mm}\cdot \left|\sum_{t=0}^{\ell}\sum_{u\in V}\mathbf{S}^{(\ell-t)}(s,u)\cdot r_{max}\hspace{-0.5mm}-\hspace{-0.5mm}\sum_{t=0}^{\ell}\sum_{u\in V}\left(\mathbf{D}^{-1}\mathbf{A}\right)^{\ell-t}(s,u)\cdot r_{max}\right|, 
\end{align*}
where $\mathbf{S}^{(\ell)}(s,u)$ records the fraction of $n_r$ random walks starting from $s\in V_t$ visit node $u$ at the $\ell$-th steps. 
We can use the Chernoff Bound given in Lemma~\ref{lem:chernoff} to show $\left|\hat{\mathbf{T}}^{(\ell)}(s,k)-\mathbf{T}^{(\ell)}(s,k)\right|$ holds with high probability. To make use of the Chernoff Bound, we first need to bound $r$, the maximum value of the random variables $\sum_{t=0}^{\ell}\sum_{u\in V}\mathbf{S}^{(\ell-t)}(s,u)\cdot r_{max}$. We note that $\sum_{u\in V}\mathbf{S}^{(\ell-t)}(s,u)=\sum_{u\in V}\sum_{w=1}^{n_r} \frac{x_w^{(\ell-t)}(s,u)}{n_r}\le 1$. Here $x_w^{(\ell-t)}(s,u)$ is an indicator variable. $x_w^{(\ell-t)}(s,u)=1$ when the $w$-th random walks starting from $s$ walk at node $u$ at the $(\ell-t)$-th step. Thus, we have $\sum_{t=0}^{\ell}\sum_{u\in V}\mathbf{S}^{(\ell-t)}(s,u)\cdot r_{max}\le (\ell+1)\cdot r_{max}$. Plugging $r=(\ell+1)\cdot r_{max}$ into Lemma~\ref{lem:chernoff}, we can derive: 
\begin{equation}\nonumber
	\begin{aligned}
&\Pr\left[\left|\hat{\mathbf{T}}^{(\ell)}(s,k)-\mathbf{T}^{(\ell)}(s,k)\right|> d(s)^r \varepsilon\right]\\
&\le \hspace{-1mm} \Pr\hspace{-1mm}\left[\left|\sum_{t=0}^{\ell}\hspace{-0.5mm} \sum_{u\in V}\hspace{-1mm}\mathbf{S}^{(\ell-t)}(s,u)\hspace{-0.5mm}\cdot \hspace{-0.5mm}r_{max}\hspace{-1mm}-\hspace{-1mm}\sum_{t=0}^{\ell}\hspace{-0.5mm} \sum_{u\in V}\hspace{-1mm}\left(\mathbf{D}^{-1}\mathbf{A}\right)^{\ell-t}\hspace{-1mm}(s,u)\hspace{-0.5mm}\cdot \hspace{-0.5mm} r_{max}\right|\hspace{-1mm}> \hspace{-1mm}\varepsilon\right]\hspace{-1.5mm}\le \hspace{-0.5mm} \exp \left(\hspace{-1mm}\frac{-n_r \cdot \varepsilon^2}{(\ell+1)r_{max}\hspace{-0.5mm}\left(\frac{2\varepsilon}{3}\hspace{-0.5mm}+2\mu\right)}\hspace{-1mm}\right), 
	\end{aligned}
\end{equation}
where $\mu\hspace{-0.5mm}=\sum_{t=0}^{\ell}\hspace{-0.5mm} \sum_{u\in V}\hspace{-1mm}\left(\mathbf{D}^{-1}\mathbf{A}\right)^{\ell-t}\hspace{-1mm}(s,u)\hspace{-0.5mm}\cdot \hspace{-0.5mm} r_{max}$. 
Applying the fact that for any $(\ell-t)\in \{0,1,...\}$, $\sum_{u\in V}\left(\mathbf{D}^{-1}\mathbf{A}\right)^{\ell-t}(s,u)=1$, we have $\mu=\hspace{-0.5mm}\sum_{t=0}^{\ell}r_{max} \cdot \sum_{u\in V}\hspace{-1mm}\left(\mathbf{D}^{-1}\mathbf{A}\right)^{\ell-t}\hspace{-1mm}(s,u)\hspace{-0.5mm}=(\ell+1)\cdot r_{max}$. 
By setting $n_r=\frac{r_{max}\left(\frac{2\varepsilon L}{3}+2r_{max}\cdot L^2\right)\cdot \log{(nL)}}{\varepsilon^2}$, we can ensure: 
\begin{align*}
\Pr\left[\left|\hat{\mathbf{T}}^{(\ell)}(s,k)-\mathbf{T}^{(\ell)}(s,k)\right|> d(s)^r \varepsilon\right]\le \exp \left(\hspace{-1mm}\frac{-n_r \cdot \varepsilon^2}{(\ell+1)r_{max}\hspace{-0.5mm}\left(\frac{2\varepsilon}{3}\hspace{-0.5mm}+2\mu\right)}\hspace{-1mm}\right) \le \frac{1}{nL}, 
\end{align*}
In the experimental part, we set $L=4$, which is much smaller than $\varepsilon$ and $r_{max}$. Hence, $n_r=O\left(\frac{r_{max}\cdot \log{(nL)}}{\varepsilon^2}\right)$. According to Lemma~\ref{lem:time}, the time complexity of Algorithm~\ref{alg:bp} can be expressed as: 
\begin{align*}
O \left(L\cdot \frac{L|V_t| r_{max}\cdot \log{(nL)}}{\varepsilon^2}\cdot F+L \cdot \frac{d}{r_{max}}\cdot F\right). 	
\end{align*}
We observe that the above complexity is minimized when $L\cdot \frac{L|V_t| r_{max}\cdot \log{(nL)}}{\varepsilon^2}\cdot F=L \cdot \frac{d}{r_{max}}\cdot F$, which implies the optimal $r_{max}=\sqrt{\frac{\varepsilon^2 d}{L|V_t|\log{(nL)}}}$. Hence, the total time complexity of Algorithm~\ref{alg:bp} can be further bounded as $O\left(L\cdot \frac{\sqrt{L|V_t| d \log{(nL)}}}{\varepsilon}\cdot F\right)$, which follows Theorem~\ref{thm:main}.

}

{\hz 
\subsection{Proof of inequality~\eqref{eqn:Pusherror}}
According to Lemma~\ref{lem:invariant}, the following Equation holds for any residue and reserve matrices $\mathbf{Q}^{(\ell)}, \mathbf{R}^{(\ell)}$, $\ell=0,..., L$: 
\begin{align}\label{eqn:invariant-t}
	\mathbf{T}^{(\ell)} = \mathbf{D}^r\cdot \left(\mathbf{Q}^{(\ell)} + \sum_{t=0}^{\ell} \left(\mathbf{D}^{-1}\mathbf{A}\right)^{\ell-t} \mathbf{R}^{(t)}\right).
\end{align}
Thus, for $\forall s\in V$ and $\forall k\in \{0, ..., F-1\}$, we have 
\begin{align}\label{eqn:inequality-left}
\mathbf{T}^{\ell}(s,k) \ge \left(\mathbf{D}^r\cdot \mathbf{Q}^{\ell} \right)(s,k). 
\end{align} 

On the other hand, when the Reverse Propagation process terminates, each residue entry satisfies $\mathbf{R}^{(\ell)}(s,k)\le r_{max}$ for $\forall u\in V$ and $\forall k\in \{0, ..., F-1\}$. Hence, we can derive:
\begin{align*}
	\left(\sum_{t=0}^{\ell} \left(\mathbf{D}^{-1}\mathbf{A}\right)^{\ell-t} \mathbf{R}^{(t)}\right)(s,k)=\sum_{t=0}^{\ell}\sum_{u\in V} \left(\mathbf{D}^{-1}\mathbf{A}\right)^{\ell-t}(s,u)\cdot \mathbf{R}^{(t)}(u,k)\le (\ell+1)\cdot r_{max}. 
\end{align*}
In the last inequality, we apply the fact that $\mathbf{R}^{(t)}(u,k)\le r_{max}$ and $\sum_{u\in V}\left(\mathbf{D}^{-1}\mathbf{A}\right)^{\ell-t}(s,u)=1$. Plugging into Equation~\eqref{eqn:invariant-t}, we can further derive the upper bound of $\mathbf{T}^{\ell}(s,k)$ as below. 
\begin{align}\label{eqn:inequality-right}
	\mathbf{T}^{(\ell)}(s,k)\le \left(\mathbf{D}^r\cdot \mathbf{Q}^{\ell}\right)(s,k)+d(s)^r\cdot (\ell+1)\cdot r_{max}. 
\end{align}
By combining the inequality~\eqref{eqn:inequality-left} and inequality~\eqref{eqn:inequality-right}, we have: 
\begin{align}\nonumber
	\mathbf{T}^{(\ell)}(s,k)-d(s)^r\cdot (\ell+1)\cdot r_{max}\le \left(\mathbf{D}^r\cdot \mathbf{Q}^{\ell}\right)(s,k)\le \mathbf{T}^{(\ell)}(s,k), 
\end{align}
which follows inequality~\eqref{eqn:Pusherror}. 
}

\section{Additional experimental results}
\subsection{ Comparison of inference time}
Figure~\ref{fig:infernece} shows the inference time of each method. We observe that in terms of the inference time, the three linear models, SGC, PPRGo and GBP, have a significant advantage over the two sampling-based models, LADIES and GraphSAINT.

 \vspace{-2mm}
\begin{figure}[ht]
    \begin{center}
     \includegraphics[height=42mm]{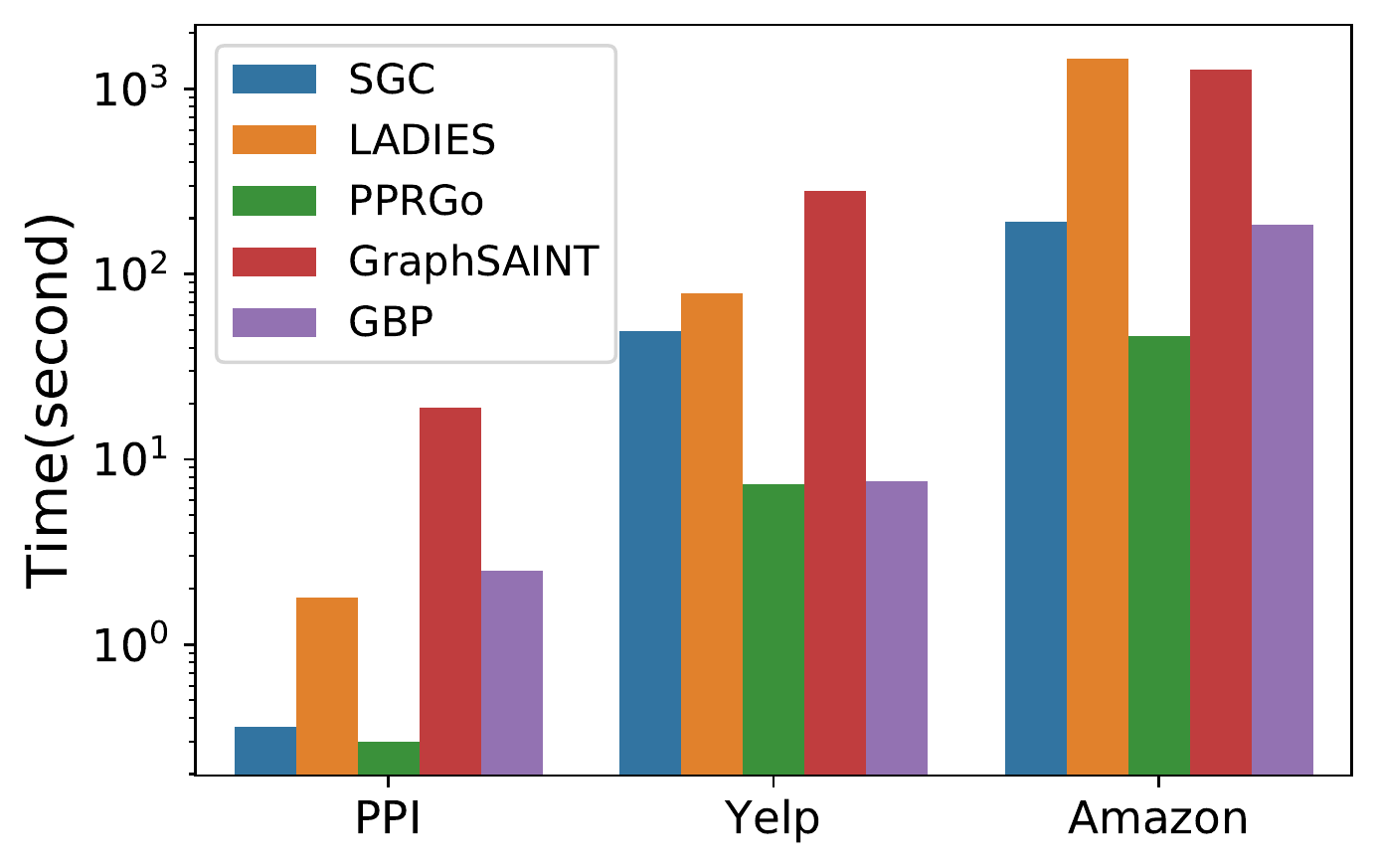}
       \vspace{-3mm}
     \caption{Inference time of 6-layers models on the entire test graph.} 
     \label{fig:infernece}
    \end{center}
\end{figure}

 \vspace{-2mm}
\subsection{Additional  details in experimental setup}

Table~\ref{table:baseline} summarizes URLs and commit numbers of baseline codes.
\begin{table}[ht]
	 \vspace{-2mm}
    \begin{center}
    \caption{URLs of baseline codes.} 
     \label{table:baseline}
    \begin{tabular}{ccc}
    \toprule
    Methods    & URL & Commit \\
    \midrule
    GCN        & \url{https://github.com/rusty1s/pytorch_geometric} & 5692a8 \\
    GAT        & \url{https://github.com/rusty1s/pytorch_geometric}    & 5692a8 \\
    APPNP      & \url{https://github.com/rusty1s/pytorch_geometric}    & 5692a8 \\
    GDC      & \url{https://github.com/klicperajo/gdc}    & 14333f \\
    SGC        & \url{https://github.com/Tiiiger/SGC} & 6c450f \\
    LADIES     & \url{https://github.com/acbull/LADIES} & c7f987 \\
    PPRGo      & \url{https://github.com/TUM-DAML/pprgo_pytorch}    & d9f991 \\
    GraphSAINT & \url{https://github.com/GraphSAINT/GraphSAINT} & cd31c3 \\
    \bottomrule
    \end{tabular}
\end{center}
\end{table}

\end{document}